\numberwithin{equation}{section}
\numberwithin{figure}{section}
\numberwithin{table}{section}
\def\algbackskip{\hskip-\ALG@thistlm}
\definecolor{lightgray}{gray}{0.9}
\definecolor{bluegreen}{rgb}{0.0, 0.87, 0.87}
\newtheorem{definition}{Definition}
\newtheorem{example}{Example}
\newtheorem{proposition}{Proposition}
\newcommand*{\bl}[1]{\mathbf{#1}}
\newcommand{\Hamiltonian}{\mathcal{H}}
\DeclareMathOperator{\Diff}{d\!}
\begin{document}
  

\title{Data-Driven Identification of Quadratic Representations for Nonlinear Hamiltonian Systems using Weakly Symplectic Liftings}

\author[$\ast$]{Süleyman Y\i ld\i z}
\affil[$\ast$]{Max Planck Institute for Dynamics of Complex Technical Systems, 39106 Magdeburg, Germany.\authorcr
	\email{yildiz@mpi-magdeburg.mpg.de}, \orcid{0000-0001-7904-605X}
}

\author[$\ast\ast$]{Pawan Goyal}
\affil[$\ast\ast$]{Max Planck Institute for Dynamics of Complex Technical Systems, 39106 Magdeburg, Germany.\authorcr
  \email{goyalp@mpi-magdeburg.mpg.de}, \orcid{0000-0003-3072-7780}
}

\author[$\dagger$]{Thomas Bendokat}
\affil[$\dagger$]{Max Planck Institute for Dynamics of Complex Technical Systems, 39106 Magdeburg, Germany.\authorcr
	\email{bendokat@mpi-magdeburg.mpg.de}, \orcid{0000-0002-0671-6291}
}

\author[$\dagger\ddagger$]{Peter Benner}
\affil[$\dagger\ddagger$]{Max Planck Institute for Dynamics of Complex Technical Systems, 39106 Magdeburg, Germany.\authorcr
  \email{benner@mpi-magdeburg.mpg.de}, \orcid{0000-0003-3362-4103}
}
\affil[$\dagger\ddagger$]{Otto von Guericke University,  Universit\"atsplatz 2, 39106 Magdeburg, Germany\authorcr
  \email{peter.benner@ovgu.de} 
  \vspace{-0.5cm}
}
  
\shorttitle{Identification of Quadratic Symplectic Representations}
\shortauthor{S. Y\i ld\i z, P. Goyal, T. Bendokat, P. Benner}
\shortdate{}
  
\keywords{Cubic Hamiltonian, quadratic Hamiltonian systems, lifting principle for dynamical systems,  structure-preserving model order reduction, weakly symplectic auto-encoder}

  
\abstract{%
We present a framework for learning Hamiltonian systems using data. This work is based on a lifting hypothesis, which posits that nonlinear Hamiltonian systems can be written as nonlinear systems with cubic Hamiltonians. By leveraging this, we obtain quadratic dynamics that are Hamiltonian in a transformed coordinate system. To that end, for given generalized position and momentum data, we propose a methodology to learn quadratic dynamical systems, enforcing the Hamiltonian structure in combination with a weakly-enforced symplectic auto-encoder. The obtained Hamiltonian structure exhibits long-term stability of the system, while the cubic Hamiltonian function provides relatively low model complexity. For low-dimensional data, we determine a higher-dimensional transformed coordinate system, whereas for high-dimensional data, we find a lower-dimensional coordinate system with the desired properties. We demonstrate the proposed methodology by means of both low-dimensional and high-dimensional nonlinear Hamiltonian systems. 	
}

\novelty{
	\begin{itemize}
		\item Inspired by quadratic lifting, allowing to rewrite nonlinear systems as quadratic systems in a lifted coordinate system, we discuss a lifting principle for nonlinear Hamiltonian systems.
		\item We propose a data-driven approach to learning a quadratic Hamiltonian coordinate system by means of weakly-enforced symplectic auto-encoders so that
		\begin{itemize}
			\item the dynamics in the learned coordinate system can be given by a quadratic system, and
			\item the underlying Hamiltonian function is cubic.
		\end{itemize}
		\item For high-dimensional data, we discuss learning a reduced coordinate system so that the above goals are achieved. Precisely,  we learn the latent dynamics, which are the low-dimensional representation of the high-dimensional data, by using a weakly-enforced symplectic autoencoder and strongly enforced latent Hamiltonian dynamics using the structure of the Hamiltonian system from only the data. This then aligns with non-intrusive model-order reduction by nonlinear projection. 
		\item By means of several examples, including high-dimensional ones, we demonstrate the proposed methodology.
	\end{itemize}
} 
\maketitle

\section{Introduction}\label{sec:intro}

	Hamiltonian dynamics are ubiquitous as a powerful mathematical tool in modeling complex physical dynamical systems \cite{arnol1989mathematical}. Classically, they are used in topics ranging from celestial mechanics \cite{siegel1995lectures} over fluid mechanics \cite{salmon1988hamiltonian} to Schrödinger equations in quantum mechanics \cite{faou2012geometric}. The coordinates in which Hamiltonian systems operate are split into generalized positions and momenta, which need to be identified from given data in order to fit a physical model to observations.

	The construction of models that can accurately capture and predict the dynamics of highly complex systems has been of interest for several decades if not centuries; see, e.g., \cite{crutchfield2012between} and references therein.
	Recently, the powerful approximation capabilities of neural networks have brought researchers in many fields closer to understanding complicated systems. Neural networks have been successfully studied for predicting complex dynamical systems \cite{lusch2018deep,vlachas2018data}, improving turbulence models \cite{fang2020neural,duraisamy2019turbulence}, classifying time series \cite{karim2019multivariate}, and studying differential equations (DEs) \cite{raissi2017inferring,rudy2017data}. On the other hand, some studies have focused on learning energy functions \cite{chen2019symplectic,greydanus2019hamiltonian,finzi2020simplifying,offen2022symplectic} rather than dynamical systems, so that they have inherited the long-term stability properties of Hamiltonian systems.
	Recently, some studies with similar goals developed data-driven operator inference (OpInf) models that preserve the Lagrangian \cite{sharma2022preserving} and Hamiltonian structure \cite{sharma2022hamiltonian,gruber2023canonical}.
	
	In this work, we are interested in learning quadratic Hamiltonian systems explaining given trajectory data from two different perspectives: lifting transformations \cite{goyal2022generalized}, and nonlinear symplectic model order reduction \cite{buchfink2021symplectic} with weak enforcement of the transformation to be symplectic. In fact, we learn the quadratic Hamiltonian systems directly from data, without needing to resort to the orignal dynamical equations.
	
	In short, given data from a Hamiltonian system, we would like to learn the dynamics in a structure-preserving way, while achieving low model complexity and having the option to reduce the dimension for high-dimensional data. This is tackled by approximating structure-preserving mappings with weakly-enforced symplectic auto-encoders and modeling of the dynamics with a quadratic Hamiltonian system.

In \cite{goyal2022generalized}, a unified approach, namely lifting transformations, is used to approximate general nonlinear systems. In the case where the dynamical system is known, one can manually design lifted variables. However, lifting the dynamical system does not necessarily lead to a Hamiltonian system. Therefore, we weakly force the lifting transformations to be symplectic by exploiting symplectic embeddings and strictly enforce the Hamiltonian structure of the dynamics equations. 

The second application of our approach lies in dimensionality reduction of Hamiltonian systems. Learning reduced-order models for Hamiltonian systems comes with some practical challenges.
Without enforcing preservation of the Hamiltonian structure in the reduced-order model, it can quickly lose accuracy~\cite{peng2016symplectic}.
A well-established approach to preserve the symplectic structure is to use linear symplectic projections, i.e., proper symplectic decomposition \cite{peng2016symplectic,afkham2017structure}. However, for Hamiltonian systems with slow decay of the Kolmogorov-$n$-width, this approach may not be feasible due to the slow decay of singular values, thus creating efficiency problems. Furthermore, for non-linear Hamiltonian functions, hyperreduction methods such as \cite{pagliantini2022gradient} are required for efficient computability of the reduced-order model. On the other hand, some of the nonlinear symplectic model order reduction techniques developed in \cite{pagliantini2021dynamical,musharbash2020symplectic}. For a broader view on structure preserving model order reduction techniques, we refer to recent work \cite{hesthaven2022reduced}.

In \cite{buchfink2021symplectic}, the  authors construct a weakly symplectic deep convolutional autoencoder as an example to practically address this problem. We take a similar approach and use a weakly-symplectic auto-encoder to map data from a Hamiltonian system to a learned \emph{quadratic} Hamiltonian system, thereby reducing model complexity considerably. The simple quadratic structure allows direct learning of the Hamiltonian system, without learning the Hamiltonian itself and without the need to calculate its gradient, which makes it possible to learn Hamiltonian dynamics even when there is no information about the underlying system. For the purpose of learning reduced dynamics, similar as in MOR, we show by numerical examples that this quadratic Hamiltonian system can be of much lower dimension than the original full order model. Learning the quadratic Hamiltonian system from data has the further advantage that no hyper-reduction methods are needed for nonlinear systems, and the reduced-order model can thus be efficiently computed. Furthermore, as our approach learns the reduced dynamics directly, we do not need to take the gradient through the auto-encoder to simulate the learned models.

The recent article \cite{sharma2023symplectic} can be seen as a complementary approach to our method for reducing the order of Hamiltonian systems. While we learn a quadratic Hamiltonian system with a general non-linear weakly-enforced symplectic auto-encoder, in \cite{sharma2023symplectic} two different versions of quadratic symplectic auto-encoders are studied, which are then used for model order reduction leading to a general non-linear Hamiltonian system. Moreover, we learn the reduced dynamics directly from data, while \cite{sharma2023symplectic} studies model order reduction, i.e., resorting to the Hamiltonian of the full-order model. In future work, a combination of both approaches seems worthwhile.

The paper is structured as follows.
In Section~\ref{sec:Theory}, we introduce the necessary mathematical background to embed Hamiltonian systems in a structure-preserving way into a higher-dimensional space and define quadratic Hamiltonian systems.
In Section~\ref{sec:learningliftedsystem} we describe the auto-encoder structure to lift Hamiltonian systems.
In Section~\ref{sec:lowdimsystem} we adapt the theory to learn low-dimensional quadratic representations of high-dimensonal data with weakly-enforced symplectic auto-encoders.
In Section~\ref{sec:num} we show the applicability of the approach, for low-dimensional systems in Subsection~\ref{sec:lowdim} and for weakly symplectic reduction of high-dimensional systems in Subsection~\ref{sec:highdim}.
Section~\ref{sec:conc} concludes the paper.
Implementation details can be found in Appendix~\ref{appendix:A}.

\section{Background}\label{sec:Theory}
In this section, we provide the necessary theoretical background needed for the derivation of our learning approach for Hamiltonian systems.
\subsection{Hamiltonian Systems and Symplectic Embedding}
The governing equations of canonical Hamiltonian systems are Hamilton's equations, namely
\begin{equation}
    \label{eq:HamiltonianEquations}
    \dot{x}(t) = J_{2n} \nabla_x \Hamiltonian(x(t)) \in \R^{2n},
\end{equation}
where  $x(t) = (q(t),p(t)) \in \R^{2n}$ with the canonical coordinates $q$ and $p$ being generalized positions and momenta, respectively, 
$$J_{2n} := \begin{bmatrix} 0 & I_n\\ -I_n & 0 \end{bmatrix} \in \R^{2n \times 2n},$$
and $\nabla_x$ denotes the gradient with respect to $x$. Moreover,  we consider an initial condition $x(0) = x_0 = (q_0,p_0) \in \R^{2n}$.
The Hamiltonian function $\Hamiltonian \colon \R^{2n} \to \R$ describes the energy of the system and is preserved along the solution trajectories. Next, we discuss the definition of a symplectic embedding, which plays  an important role in our later discussions. 

\begin{definition}[Symplectic Embedding for Vector Spaces]
    A \emph{symplectic embedding} of $\R^{2n}$ into $\R^{2N}$ is a homeomorphism $\psi \colon \R^{2n} \to  \psi(\R^{2n})\subset \R^{2N}$ for which the Jacobian $\Diff \psi_x \in \R^{2N \times 2n}$ fulfills
    \begin{equation}
        \label{eq:symplecticEmbedding}
        (\Diff \psi_x)^T J_{2N} \Diff \psi_x = J_{2n}
    \end{equation}
    at every $x \in \R^{2n}$.
\end{definition}
It is immediate to see that a symplectic embedding is a smooth embedding in the sense of differential geometry \cite[Section 22, p. 568]{lee2012smooth}, as the Jacobian has full rank at every point. The Jacobian is therefore injective and $\psi$ is an immersion. This furthermore implies $N \geq n$. Therefore, a symplectic embedding is also called a \emph{symplectic lifting}.

The image $\psi(\R^{2n})$ of $\R^{2n}$ under $\psi$ is a manifold, for which $\psi^{-1}$ defines a global chart, by definition \cite{lee2012smooth}.

\begin{proposition}[Equivalent Embedded System]
    Let $\psi \colon \R^{2n} \to \R^{2N}$ be a symplectic embedding and define $z_0 := \psi(x_0) \in \R^{2N}$.
    Then, the system~\eqref{eq:HamiltonianEquations} is equivalent to the embedded system
    \begin{equation}
        \label{eq:EmbeddedSymplecticSystem}
        \dot z(t) = J_{2N} \nabla_z \Hamiltonian(\psi^{-1}(z(t))),
    \end{equation}
    in the sense that $z(t) := \psi(x(t))$ solves \eqref{eq:EmbeddedSymplecticSystem} for all $t \in [0,\infty)$.
\end{proposition}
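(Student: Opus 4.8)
The plan is to verify directly that the proposed curve $z(t):=\psi(x(t))$ satisfies the embedded equation~\eqref{eq:EmbeddedSymplecticSystem}, reducing the claim to a purely algebraic identity between the Jacobian $\Diff\psi_x$ and the two structure matrices $J_{2n}, J_{2N}$. First I would differentiate the ansatz with the chain rule to obtain $\dot z(t) = \Diff\psi_{x(t)}\,\dot x(t)$, and then substitute the original dynamics~\eqref{eq:HamiltonianEquations} to get $\dot z(t) = \Diff\psi_{x(t)}\, J_{2n}\,\nabla_x\Hamiltonian(x(t))$. It therefore remains to show that the right-hand side of~\eqref{eq:EmbeddedSymplecticSystem} equals this expression; the initial condition matches by construction, since $z(0)=\psi(x_0)=z_0$.

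For the right-hand side I would write $\tilde\Hamiltonian := \Hamiltonian\circ\psi^{-1}$, so that the gradient chain rule gives $\nabla_z\tilde\Hamiltonian(z) = (\Diff(\psi^{-1})_z)^T\,\nabla_x\Hamiltonian(x)$ with $x=\psi^{-1}(z)$. The key observation is that the symplectic embedding condition~\eqref{eq:symplecticEmbedding} supplies a canonical left inverse of the Jacobian, namely the \emph{symplectic inverse}
\[
(\Diff\psi_x)^+ := -J_{2n}\,(\Diff\psi_x)^T J_{2N},
\]
which satisfies $(\Diff\psi_x)^+\,\Diff\psi_x = I_{2n}$ by~\eqref{eq:symplecticEmbedding} together with $J_{2n}^2 = -I_{2n}$. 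Taking $\Diff(\psi^{-1})_z = (\Diff\psi_x)^+$ and transposing yields $\big(\Diff(\psi^{-1})_z\big)^T = -J_{2N}\,\Diff\psi_x\, J_{2n}$, so that
\[
J_{2N}\,\nabla_z\tilde\Hamiltonian(z) = -J_{2N}^2\,\Diff\psi_x\, J_{2n}\,\nabla_x\Hamiltonian(x) = \Diff\psi_x\, J_{2n}\,\nabla_x\Hamiltonian(x) = \dot z(t),
\]
where the middle step uses $J_{2N}^2 = -I_{2N}$. This coincides with the expression produced in the first step, closing the argument.

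The step that needs the most care, and which I expect to be the genuine obstacle, is assigning a rigorous meaning to $\nabla_z\tilde\Hamiltonian$: since $\psi^{-1}$ is only defined on the embedded manifold $\psi(\R^{2n})\subset\R^{2N}$, its ambient gradient is determined only up to a component normal to the manifold, and the field $J_{2N}\nabla_z(\cdot)$ genuinely depends on that normal component. The resolution is that $(\Diff\psi_x)^+$ is exactly the inverse that annihilates the symplectic orthogonal complement of the tangent space (because $(\Diff\psi_x)^T J_{2N} w = 0$ there), so $\Diff\psi_x\,(\Diff\psi_x)^+$ is the projection onto $T_z\psi(\R^{2n})$ along that complement. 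I would therefore make explicit that $\Diff(\psi^{-1})_z$ in the chain rule is taken as this symplectic inverse — equivalently, that the embedded Hamiltonian vector field is chosen tangent to $\psi(\R^{2n})$. With this convention the computation above is exact and additionally shows that the embedded flow remains on $\psi(\R^{2n})$ for all $t\in[0,\infty)$, which is precisely the sense in which the two systems are equivalent.
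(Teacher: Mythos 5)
Your proof is correct and follows essentially the same route as the paper's: differentiate $z(t)=\psi(x(t))$ by the chain rule, expand $\nabla_z\bigl(\Hamiltonian\circ\psi^{-1}\bigr)$ by the chain rule, and identify $\Diff(\psi^{-1})_z$ with the symplectic left inverse $J_{2n}^T(\Diff\psi_x)^T J_{2N} = -J_{2n}(\Diff\psi_x)^T J_{2N}$, which is precisely the matrix appearing in the paper's argument. The only differences are presentational: where the paper justifies this identification by citing the pushforward property of Hamiltonian vector fields under symplectic maps, you verify the needed identity by direct matrix algebra from \eqref{eq:symplecticEmbedding} together with $J_{2n}^2=-I_{2n}$ and $J_{2N}^2=-I_{2N}$; and your careful discussion of the normal-component ambiguity of $\psi^{-1}$ off the image is the same point the paper makes when it remarks that $\psi^{-1}$ is extended around $\psi(\R^{2n})$ so that its Jacobian equals that symplectic left inverse.
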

\begin{proof}
    For any $z \in \psi(\R^{2n})$, it holds with the chain rule that
    \begin{align*}
        J_{2N}\nabla_z \Hamiltonian(\psi^{-1}(z)) &= J_{2N}(\Diff\, (\Hamiltonian \circ \psi^{-1})_z(z))^T \\&= J_{2N}(\Diff {\psi^{-1}}_z)^T (\Diff \Hamiltonian_x(\psi^{-1}(z)))^T\\
		&= J_{2N} (\Diff {\psi^{-1}}_z)^T \nabla_x \Hamiltonian(\psi^{-1}(z)) \\ &= \Diff \psi_x J_{2n} \nabla_x \Hamiltonian(\psi^{-1}(z)).
    \end{align*}
	The last equality comes from the fact that the pushforward of the Hamiltonian vector field of $\Hamiltonian$ by $\psi$ is equal to the Hamiltonian vector field of $\Hamiltonian \circ \psi$.
	On a matrix level, this means that $\psi^{-1}$ can be extended around $\psi(R^{2n})$ so that its Jacobian is equal to the symplectic left inverse of the Jacobian of $\psi$, i.e., $\Diff {\psi^{-1}}_{\psi(x)} = J_{2n}^T\Diff \psi_x^T J_{2N}$.
	
    As $z(t) := \psi(x(t))$ implies $\dot z(t) = \Diff \psi_{x(t)} \dot x(t) = \Diff \psi_{x(t)} J_{2n} \nabla_x\Hamiltonian(x(t))$, the claim follows.
\end{proof}

As $\psi$ is a symplectic lifting, the system for $z$ is called a \emph{symplectic lifting of the system for $x$}.

\subsection{Quadratic Hamiltonian Representations}
There are many possibilities to construct a symplectic embedding of a nonlinear Hamiltonian system. However, in this work, we are seeking to identify a particular higher dimensional or lifted space so that a quadratic system can describe the dynamics in the lifted space. Moreover, the Hamiltonian in the lifted space is a cubic polynomial function. To briefly describe the lifting procedure, we consider the system of ODEs
\begin{equation}\label{eq:ode}
\dot x(t) = f(x(t)), \quad x(t) \in \mathbb{R}^n.
\end{equation}
The quadratic lifting transformation \cite{savageau1987recasting,gu2011qlmor,qian2020lift,goyal2022generalized} can be obtained by defining a transformation $z(t) = \psi(x(t))\in \mathbb{R}^N$ for $N \geq n$ such that the transformed system \eqref{eq:ode} satisfies
\begin{equation}\label{eq:lifted_sys}
\dot z = \mathcal{A} + \mathcal{B}z + \mathcal{C}z \otimes z, \quad z(t) \in \mathbb{R}^{N}.
\end{equation}
We illustrate the quadratic lifting for nonlinear systems by means of a nonlinear oscillator example. 

\begin{example}[Nonlinear Oscillator]
	Consider the nonlinear (an-harmonic) oscillator \cite{mattheakis2022hamiltonian} with the Hamiltonian $\mathcal H (q,p)= \frac{p^2}{2}+\frac{q^2}{2}+\frac{q^4}{4}$. The associated Hamiltonian system for this problem is given by
	\begin{equation}\label{eqn:osc}
	\begin{aligned}
	&\dot{q}=p,\\
	&\dot{p}=-(q+q^3).
	\end{aligned}
	\end{equation}
	We demonstrate the lifting transformation by introducing the variables $w_1=q$, $w_2=p$, and $w_3=q^2$. With the new variable $w_3$, the equations of motion for the oscillator \cref{eqn:osc} can be written as lifted dynamical system
	\begin{equation}\label{eqn:lift_osc}
	\begin{aligned}
	&\dot{w}_1=w_2,\\
	&\dot{w}_2=-(w_1+w_1w_3),\\
	&\dot{w}_3=2w_1w_2,\\
	\end{aligned}
	\end{equation}
	which is a quadratic system. Moreover, one can also define an inverse mapping from $(w_1,w_2,w_3)$ to $(q,p)$. However, it is easy to note that the system in \cref{eqn:lift_osc} is not a canonical Hamiltonian system since it is odd-dimensional. We further note that even introducing new variables to make the lifted system \cref{eqn:lift_osc} even dimensional does not necessarily result in a Hamiltonian system.
	
	Notably, the theory of generating functions can be used to construct quadratic Hamiltonian systems. Generating functions relate two canonical Hamiltonian systems via canonical transformations. There are four fundamental generating functions. For a detailed overview of generating functions, we refer to the book \cite{strauch2009classical}. To illustrate this for the nonlinear oscillator, we suppose $\hat{p}=q^2$. Then, using a generating function of type 1, one can find $F_1(q,\hat{q})=-\hat{q}q^2$, so that $p=\frac{\partial F_1}{\partial q}=-2\hat{q}q$,  implying $ p=-2 \hat{q}\hat{p}^{1/2} $ and $ q=\hat{p}^{1/2} $. The new Hamiltonian with new variables becomes $\hat{\mathcal H}= 2\hat{q}^2\hat{p}+\frac{\hat{p}}{2}+\frac{\hat{p}^2}{4} $, which is cubic; hence, the underlying dynamics are given by a quadratic system.
\end{example}

Inspired by the above example, in this work, we seek to identify a symplectic space to lift to. The desired properties can be achieved when the lifted system \cref{eq:lifted_sys} satisfies \cref{eq:EmbeddedSymplecticSystem} with a symplectic lifting $\psi$ fulfilling \cref{eq:symplecticEmbedding}. For this, we first define quadratic Hamiltonian systems for our reference. 

\begin{definition}[Quadratic Hamiltonian System]
	A \emph{quadratic Hamiltonian system} is a Hamiltonian system \eqref{eq:HamiltonianEquations} for which the Hamiltonian function is cubic, i.e.,
	\begin{equation*}
	\Hamiltonian(x) = A^Tx + B^T(x \otimes x) + C^T (x \otimes x \otimes x),
	\end{equation*}
	where $A \in \R^{2n}$, $B \in \R^{(2n)^2}$, $C \in \R^{(2n)^3}$, and $\otimes$ denotes the Kronecker product.
\end{definition}
The simple structure of quadratic Hamiltonian systems allows enforcing the Hamiltonian condition directly onto the system, without having to compute the gradient of the Hamiltonian function.
\begin{proposition}
	A quadratic system of ODEs
	\begin{equation}
	\label[eq]{quadraticODEHamiltonian}
	\dot x = \mathcal{A} + \mathcal{B}x + \mathcal{C}(x \otimes x),\quad x(t) \in \R^{2n},
	\end{equation}
	where $\mathcal{A} \in \R^{2n}$, $\mathcal{B} \in \R^{2n \times 2n}$ and $\mathcal{C} \in \R^{2n \times (2n)^2}$,
	is a quadratic Hamiltonian system if and only if $J_{2n}^T \mathcal{B}$ is a symmetric matrix and there is a symmetric tensor $\mathcal{T}\in \R^{2n \times 2n \times 2n}$ for which
	\[
	\mathcal{T}_u (x \otimes x )= J_{2n}^T \mathcal{C} (x \otimes x)
	\]
	holds for all $x \in \R^{2n}$, where $\mathcal{T}_u \in \R^{2n \times (2n)^2}$ is the unfolding of $\mathcal{T}$ by frontal slices in the sense of \cite[Figure 2.2 (c)]{KolB09} by concatenating the frontal slices in a row. 
\end{proposition}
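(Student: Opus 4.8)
The plan is to reduce the statement to the classical criterion that a vector field on the simply connected domain $\R^{2n}$ is a gradient field if and only if its Jacobian is symmetric everywhere, and then to read off the two stated conditions from the two nonconstant homogeneous parts of that Jacobian.

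First I would rewrite the Hamiltonian condition in potential form. Since $J_{2n}^{-1}=J_{2n}^T$, the system~\eqref{quadraticODEHamiltonian} is a quadratic Hamiltonian system exactly when the vector field
\[
g(x) := J_{2n}^T\bigl(\mathcal{A} + \mathcal{B}x + \mathcal{C}(x \otimes x)\bigr)
\]
equals $\nabla_x \Hamiltonian(x)$ for some scalar function $\Hamiltonian$. Because $g$ is a polynomial of degree at most two in $x$, any such potential is automatically a polynomial of degree at most three; dropping the irrelevant additive constant, it has precisely the cubic form required in the definition of a quadratic Hamiltonian system. Hence the qualifier ``cubic'' needs no separate verification, and it suffices to decide when $g$ is a gradient field. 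On $\R^{2n}$ this holds if and only if the Jacobian $\mathrm{D}g(x)$ is symmetric for every $x$.

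Next I would split $\mathrm{D}g$ by homogeneity. The constant part $J_{2n}^T\mathcal{A}$ contributes nothing, the linear part contributes the constant matrix $J_{2n}^T\mathcal{B}$, and the quadratic part contributes $\mathrm{D}\bigl[J_{2n}^T\mathcal{C}(x\otimes x)\bigr]$, which is linear in $x$. A matrix-valued polynomial in $x$ is symmetric for all $x$ if and only if each of its homogeneous parts is symmetric; thus $\mathrm{D}g$ is symmetric for all $x$ precisely when $J_{2n}^T\mathcal{B}$ is symmetric \emph{and} the Jacobian of the quadratic field $x \mapsto J_{2n}^T\mathcal{C}(x\otimes x)$ is symmetric for all $x$. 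The first of these is exactly the stated condition on $\mathcal{B}$, and since everything in the chain is an equivalence, assembling the pieces will deliver both directions of the ``if and only if'' at once.

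The remaining, and main, step is to show that symmetry of $\mathrm{D}\bigl[J_{2n}^T\mathcal{C}(x\otimes x)\bigr]$ for all $x$ is equivalent to the existence of the symmetric tensor $\mathcal{T}$. Writing $\tilde{\mathcal{C}} := J_{2n}^T\mathcal{C}$ in coordinates, the $i$-th component of the field is $\sum_{j,k}\tilde{\mathcal{C}}_{i,jk}\,x_j x_k$, which only sees the part of $\tilde{\mathcal{C}}$ symmetrized in the pair $(j,k)$; call this symmetrized array $S_{ijk}$, so $S_{ijk}=S_{ikj}$. Differentiating yields $\partial_{x_\ell}\bigl(J_{2n}^T\mathcal{C}(x\otimes x)\bigr)_i = 2\sum_k S_{i\ell k}x_k$, so the Jacobian is symmetric for all $x$ if and only if $S_{i\ell k}=S_{\ell i k}$ for all indices, which together with $S_{ijk}=S_{ikj}$ is exactly full symmetry of $S$ in all three indices. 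Taking $\mathcal{T}=S$ and comparing quadratic forms shows this is equivalent to the claimed identity $\mathcal{T}_u(x\otimes x)=J_{2n}^T\mathcal{C}(x\otimes x)$ for a symmetric $\mathcal{T}$, with $\mathcal{T}_u$ the frontal-slice unfolding. I expect the only delicate point to be keeping the Kronecker and unfolding index conventions consistent, in particular justifying that passing to the $(j,k)$-symmetric part $S$ loses no information because $x\otimes x$ is itself symmetric in those indices.
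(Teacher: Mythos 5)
Your proof is correct, and it reaches the statement by a genuinely different key lemma than the paper. The paper's argument is purely algebraic: it rewrites the Hamiltonian condition as $\nabla_x \Hamiltonian(x) = J_{2n}^T\left(\mathcal{A} + \mathcal{B}x + \mathcal{C}(x \otimes x)\right)$ and then invokes the bijection between homogeneous polynomials and symmetric tensors \cite[p.~6]{comon2008symmetric}, so that the symmetry of $J_{2n}^T\mathcal{B}$ and the existence of the symmetric $\mathcal{T}$ are read off as precisely the coefficient structure that gradients of cubic polynomials possess; the coefficient matching and the converse (constructing $\Hamiltonian$ from the symmetric data) are left implicit. You instead reduce everything to the classical integrability criterion on the simply connected domain $\R^{2n}$ --- a smooth field is a gradient if and only if its Jacobian is everywhere symmetric --- then split the Jacobian by homogeneity and verify by an explicit index computation that symmetry of $\mathrm{D}\bigl[J_{2n}^T\mathcal{C}(x\otimes x)\bigr]$ is exactly full symmetry of the $(j,k)$-symmetrized coefficient array $S$, which you then identify with $\mathcal{T}$. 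Your route buys three things: it is self-contained (no appeal to the polynomial--tensor correspondence), it makes explicit the otherwise silent point that only the part of $\mathcal{C}$ symmetrized in the Kronecker indices can matter because $x \otimes x$ is itself symmetric, and it disposes of the ``cubic'' qualifier automatically via the observation that any potential of a degree-two polynomial field is a degree-three polynomial up to an irrelevant additive constant. The paper's route buys brevity and a mechanism that transfers verbatim to Hamiltonians of any polynomial degree, whereas your homogeneity splitting and index check would have to be redone degree by degree (though the pattern is evident). Since both arguments proceed through chains of equivalences, both deliver the two directions of the ``if and only if'' simultaneously; the only loose end in yours is the Kronecker/unfolding index bookkeeping, which you correctly flag and which matches the convention of \cite[Figure 2.2 (c)]{KolB09}.
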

\begin{proof}
	By definition, \eqref{quadraticODEHamiltonian} is a quadratic Hamiltonian system if and only if there is a cubic function $\Hamiltonian(x) = A^Tx + B^Tx \otimes x + C^T x \otimes x \otimes x$ such that $\dot x = J_{2n} \nabla_x \Hamiltonian(x)$, which is equivalent to
	\begin{align*}
	\nabla_x \Hamiltonian(x) &= \nabla_x \left(A^Tx + B^T(x \otimes x) + C^T (x \otimes x \otimes x)\right) 
	\\ &=  J_{2n}^T (\mathcal{A} + \mathcal{B}x + \mathcal{C}x \otimes x).
	\end{align*}
	Since there is a bijection between homogenous polynomials and symmetric tensors \cite[p. 6]{comon2008symmetric}, the claim follows. The unfolding is done in order to be able to use the Kronecker product and fit the equations into the matrix framework.
\end{proof}

For many smooth nonlinear systems, there exist guaranteed liftings which allow us to rewrite nonlinear systems as quadratic systems, see, e.g., \cite{savageau1987recasting,gu2011qlmor}. However, there is currently no established result ensuring the existence of a symplectic lifting for nonlinear Hamiltonian systems to higher dimensions where the dynamics can be represented by quadratic Hamiltonian systems.  Exploring this aspect remains an intriguing theoretical endeavor for future research. In this work, however, we hypothesize the existence of such a system and focus on learning such a symplectic lifting/embedding by means of suitable optimization problems, which we discuss next. 

\section{Learning the Lifted Quadratic Symplectic Representation}
\label{sec:learningliftedsystem}
Here, we describe our methodology to learn a weakly symplectic lifting to map from a given canonical Hamiltonian system to a latent canonical quadratic Hamiltonian system, which we visualize in \Cref{fig:lifting}. 
\begin{figure*}[tb]
	\includegraphics{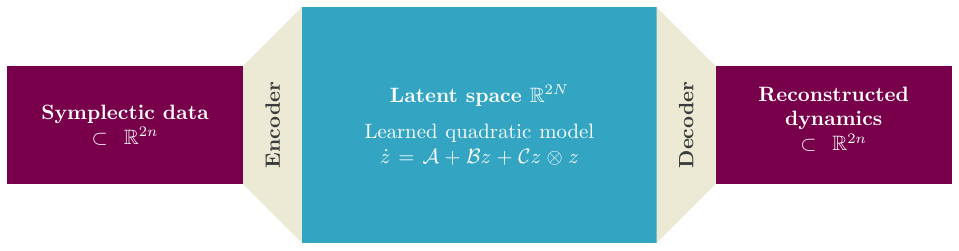}    
	\caption{The auto-encoder structure of the symplectic lifting method. Here, the encoder $\psi: \R^{2n} \to \R^{2N}$ is weakly enforced to be a symplectic mapping and the quadratic system is enforced to be Hamiltonian.}
	\label{fig:lifting}   
\end{figure*}
The first ingredient to it is to define lifted coordinates $z(t)$ using a classical auto-encoder loss as follows:
\begin{equation}\label{eq:loss1}
\mathcal L_{\text{encdec}}=\| x(t)-\phi(\psi(x(t)))\|,
\end{equation}
where $\psi(x(t))=z(t)=(\hat{q}(t),\hat{p}(t))$ and $\phi(z(t))=\tilde{x}(t) = (q(\hat{q}(t)),p(\hat{p}(t)))\approx x(t)$. However, the mapping obtained through \cref{eq:loss1} does not necessarily yield a symplectic mapping. To approximate a symplectic map, we use \eqref{eq:symplecticEmbedding} and define a symplectic loss as follows:
\begin{equation}\label{eq:loss2}
\mathcal L_{\text{symp}}=\| (\Diff \psi_x)^T J_{2N} \Diff \psi_x - J_{2n}\|.
\end{equation}
Furthermore, we assume that time derivatives of states are accessible. Thus, we compute the time derivatives of the lifted space $z$ using the chain-rule. Hence, we add the following term in the loss function:
\begin{equation}\label{eq:loss3}
\begin{aligned}
\mathcal L_{\dot z\dot x} &= \left\| \Diff \psi_{x(t)} \dot x(t) -J_{2N} \nabla_z \Hamiltonian(\psi^{-1}(z(t))) \right\| \\
&=\| \Diff \psi_{x(t)} \dot x(t) - (\mathcal{A} + \mathcal{B}z(t) + \mathcal{C}z(t) \otimes z(t))\| 
\end{aligned}
\end{equation}
with $z = \psi(x)$. Finally, to obtain a quadratic Hamiltonian system, we combine all these losses defined in \cref{eq:loss1,eq:loss2,eq:loss3}. Hence we have the total loss as a weighted sum of these loss functions, given by
\begin{equation}\label{eq:total_loss}
\mathcal L = \lambda_1\mathcal L_{\text{encdec}} +\lambda_2 \mathcal L_{\text{symp}}  +\lambda_3\mathcal L_{\dot z\dot x} ,
\end{equation}
where $\lambda_{\{1,2,3\}}$ are hyper-parameters. Finally, we optimize the total loss function with respect to auto-encoder parameters and $\mathcal{A}, \mathcal{B}, \mathcal{C}$.
The details of the implementation and auto-encoders are given in Appendix~\ref{appendix:A}. Finally, we optimise all parameters in \eqref{eq:total_loss} at the same time.

We remark that we do not enforce the homeomorphism property, but only enforce \eqref{eq:symplecticEmbedding}, i.e., the condition that the encoder approximates a symplectic immersion and is therefore locally invertible, and that the encoder is invertible on the training data.

However, note that using the loss functions only trains these properties on the training data. When the encoder is applied to data far outside the training data distribution, it might neither be invertible nor close to a symplectic one.

\section{Low-dimensional Quadratic Symplectic Representation of High-dimensional Data}
\label{sec:lowdimsystem}
Thus far, we have discussed how nonlinear Hamiltonian systems can be lifted to higher dimensional quadratic Hamiltonian systems and how they can be learned by means of data. However, there are many Hamiltonian systems which are high-dimensional, specially coming from partial differential equations. Furthermore, it is known that high dimensional dynamic data often evolve in a lower-dimensional subspace. Therefore, in these cases, we aim to learn lower-dimensional coordinates for high-dimensional data so that the learned dynamics are not only Hamiltonian but can also be used to describe the dynamics of the high-dimensional system, as depicted in \Cref{fig:reduction}.

\begin{figure*}[tb]
	\includegraphics{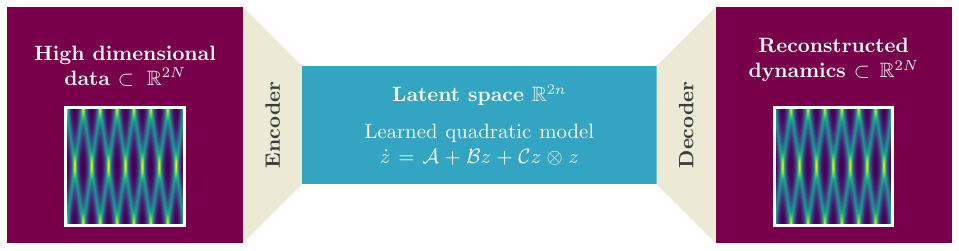}
	\caption{The auto-encoder structure of the symplectic reduction method. Here, the decoder $\phi: \R^{2n} \to \R^{2N}$ is weakly enforced to be a symplectic mapping and the quadratic system is enforced to be Hamiltonian.}
	\label{fig:reduction}   
\end{figure*} 

However, there is a subtlety compared to the method discussed in the previous section. It is worth noting that \cref{eq:loss2} will weakly enforce symplecticity in the case of symplectic lifting. On the other hand, for high-dimensional data, the quadratic system is of lower dimension than the original data; thus, we actually need to weakly enforce that the \emph{decoder} $\phi$, and not the encoder $\psi$, of the auto-encoder is a symplectic embedding from the quadratic model to the original high-dimensional system.

Since the high-dimensional data, particularly coming from partial differential equations, have a lot of spatial coherency, we make use of a deep convolutional auto-encoder (DCA), which is computationally efficient.
Moreover, to enforce the symplecticity condition in the loss function, we use a weakly symplectic deep convolutional auto-encoder with essentially the same conditions as in \cite[Section 3.3]{buchfink2021symplectic}. This means that in the symplectic reduction, instead of \cref{eq:loss2}, we use the following loss terms for symplectic loss:
\begin{equation}
\tilde{\mathcal L}_{\text{symp}}=\| (\Diff \phi_z)^T J_{2N} \Diff \phi_z - J_{2n}\|.
\end{equation}
As we have switched the role of $n$ and $N$ in the reduction case, \eqref{eq:loss3} is replaced by
\begin{equation}
	\begin{aligned}
	\tilde{\mathcal L}_{\dot z\dot x} &= \left\| \Diff \psi_{x(t)} \dot x(t) -J_{2n} \nabla_z \Hamiltonian(\psi^{-1}(z(t))) \right\| \\
	&=\| \Diff \psi_{x(t)} \dot x(t) - (\mathcal{A} + \mathcal{B}z(t) + \mathcal{C}z(t) \otimes z(t))\|.
	\end{aligned}
\end{equation}

	The loss of the auto-encoder $\mathcal L_{\text{encdec}}$ is given by \eqref{eq:loss1} as in the lifting case. Hence, the total loss is calculated via
\begin{equation}
\mathcal L = \lambda_1\mathcal L_{\text{encdec}} +\lambda_2 \tilde{\mathcal L}_{\text{symp}}  +\lambda_3\tilde{\mathcal L}_{\dot z\dot x},
\end{equation}
which is then used to learn a suitable embedding.

\section{Numerical Experiments}\label{sec:num}	
In this section, we examine the performance of the proposed methodology in two scenarios: low-dimen\-sio\-nal dynamical systems and high-dimensional dynamical systems.
For the low-dimensional case, we investigate three different examples: the simple pendulum, an an-harmonic oscillator, and the Lotka-Volterra equations. These examples allow the validation of the proposed methodology due to the known solutions (except for the Lotka-Volterra example), so that we can compare to the ground truth. For the high-dimensional case, we study the linear wave and nonlinear Schrödinger equations. All the experiments are done using \texttt{PyTorch} on a machine with an \texttt{Intel\textsuperscript{\tiny\textcopyright} Core\textsuperscript{\tiny TM} i5-12600K} CPU and \texttt{NVIDIA RTX\textsuperscript{\tiny TM} A4000(16GB)} GPU. To preserve the symplectic structure after time discretization, we have used the implicit midpoint rule as time integrator.
In the case of symplectic lifting for all low-dimensional examples, we set the dimension of the latent space---for which the dynamics are quadratic and have a constant cubic Hamiltonian---to four. In the case of symplectic reduction we set the dimension of the latent space of the linear wave equation to four and of the nonlinear Schrödinger equation to two.  All other hyperparameter settings and neural network architectures are listed in detail in \Cref{appendix:A} for each example. 
\subsection{Low-dimensional Systems}\label{sec:lowdim} 
Here, we discuss learning dynamical systems using low-dimensional data by means of three examples.

\subsubsection{Nonlinear Pendulum} Our first example of low-dimensional dynamics is a frictionless pendulum. Pendulums are non-linear oscillators and Hamiltonian systems, making them challenging to learn solely from data  due to their continuous spectra \cite{lusch2018deep}.

 The Hamiltonian for the pendulum can be given in non-dimensional form by
\begin{equation}
\Hamiltonian(q,p) =  1-p^2 +\cos (q).
\end{equation}
 Consequently, we can express the governing equations that define the evolutions of $p$ and $q$ as follows:
\begin{equation}
	\begin{bmatrix}
		\dot{q}(t)\\ \dot{p}(t) 
	\end{bmatrix} = \begin{bmatrix}
		p(t)\\ -\sin(q(t))
	\end{bmatrix}.
\end{equation}

To generate the training dataset, we consider initial conditions for variables $p$ and $q$ within the range of $[-2, 2] \times [-2, 2]$, encompassing the transition from linear to nonlinear dynamics in the system. However, we choose initial conditions from the range with energy $\Hamiltonian(q,p) < 2$ to avoid the pendulum to complete a circle, which prevents unbounded trajectories in phase space. We consider $10$ random initial conditions and take $50$ equidistant data points in the time interval $[0,10]$. We have pictorially shown the training data in \Cref{fig:pend_trainingdata}.

\begin{figure*}[tb]
	\centering
	\begin{subfigure}[t]{0.3\textwidth}
		\includegraphics[width=1\linewidth]{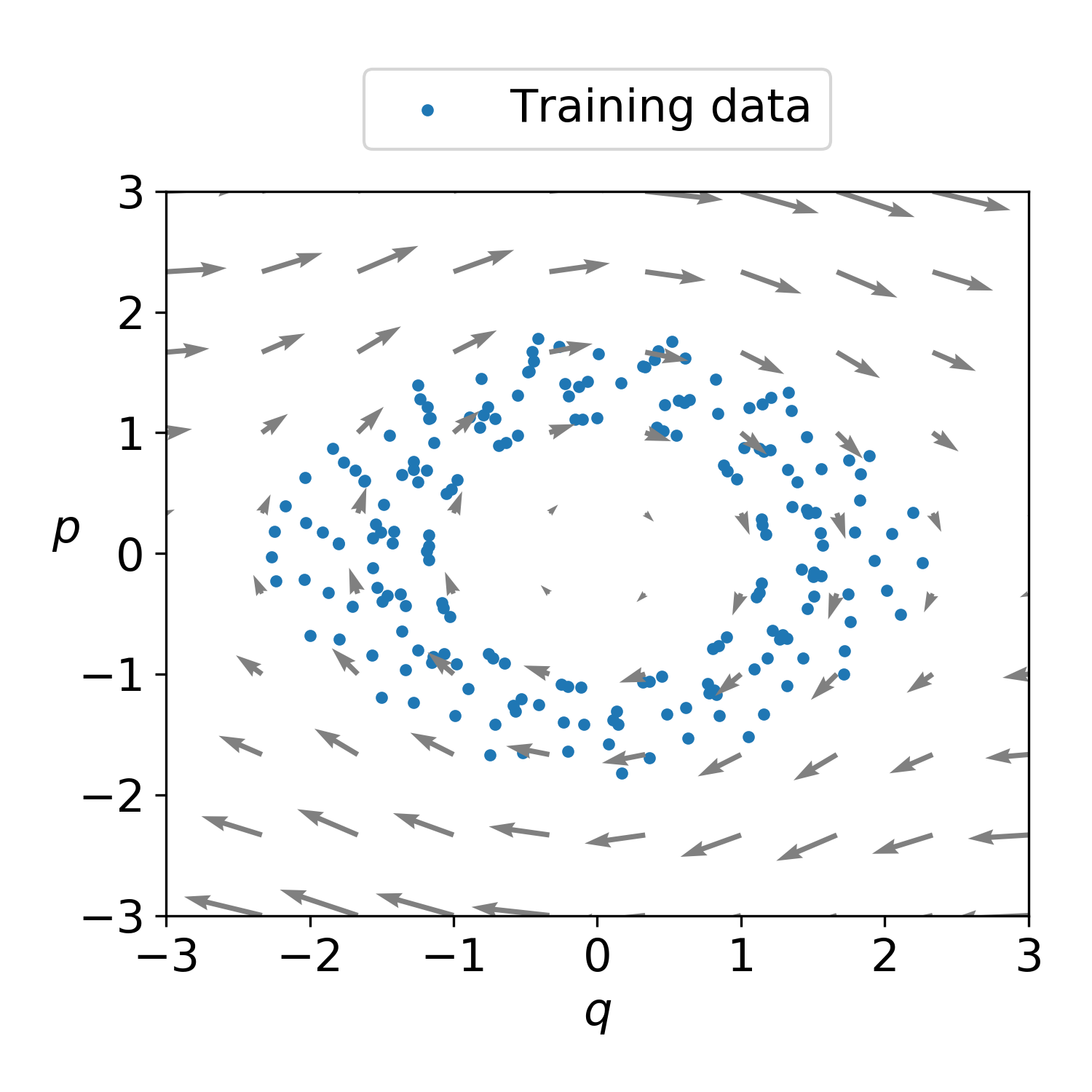}
		\caption{Training data.}
		\label{fig:pend_trainingdata}
	\end{subfigure}
	\begin{subfigure}[t]{0.3\textwidth}
		\includegraphics[width=1\linewidth]{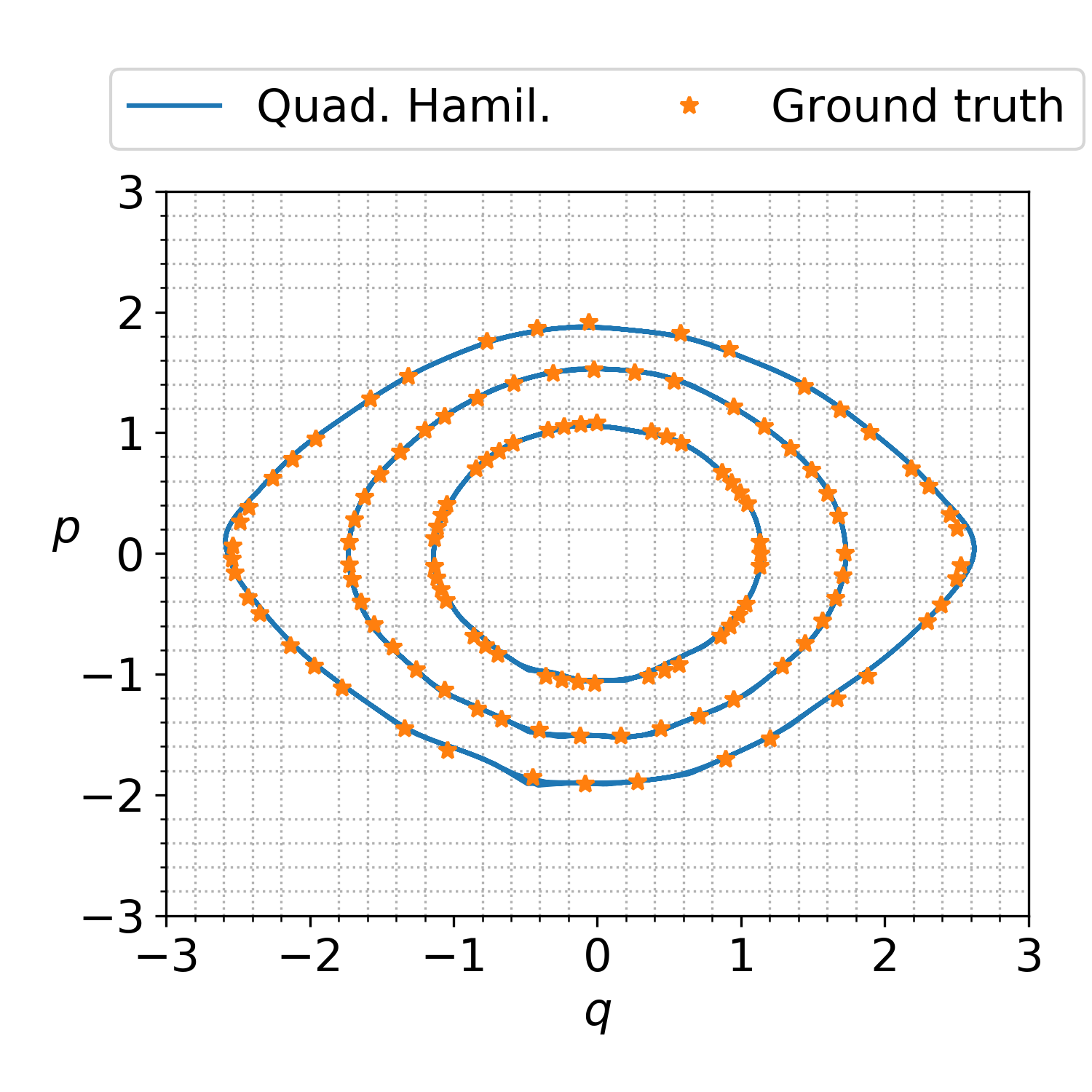}
		\caption{A comparison of the learned model \eqref{quadraticODEHamiltonian} with the ground truth in phase space.}
		\label{fig:pend-phase}
	\end{subfigure}	
	\begin{subfigure}[t]{0.3\textwidth}
	\includegraphics[width=1\linewidth]{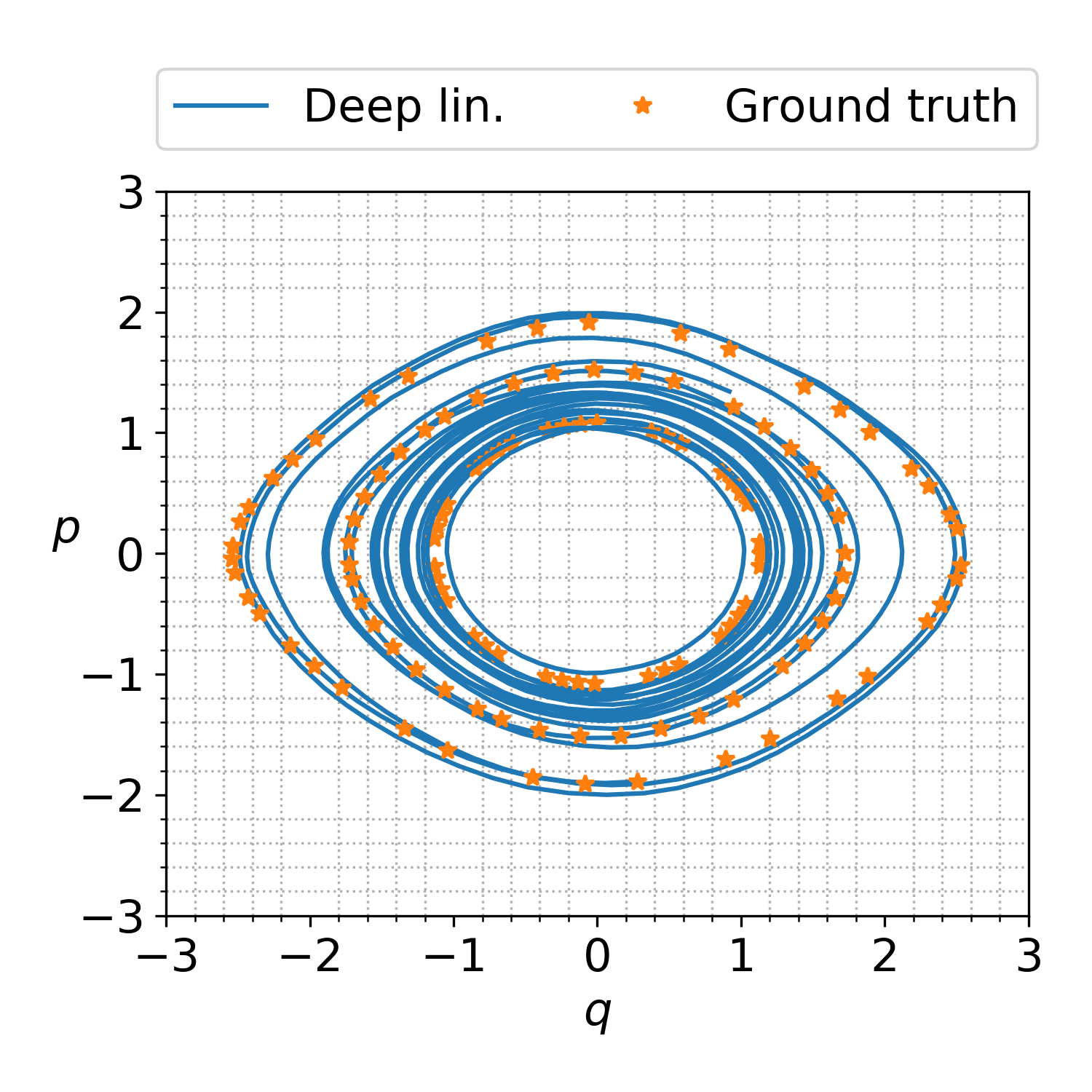}
	\caption{A comparison of the learned deep linear embedding with the ground truth model in phase space.}
	\label{fig:pend-phase-linear}
	\end{subfigure}
	
	\caption{Nonlinear pendulum: Plot (a) shows the training data with the blue dots in phase space together with the ground truth vector field, and Plot (b) shows a comparison of the learned model \eqref{quadraticODEHamiltonian} with the ground truth in phase space with three random initial conditions. Plot (c) shows a comparison of the learned deep linear embedding with the ground truth in phase space with three random initial test conditions.}
\end{figure*}

Next, we learn latent variables $\hat p$ and $\hat q$ with our desired objective, which is that the dynamics of the latent variables can be given by a quadratic system with a cubic Hamiltonian. Moreover, the latent variables are learned by means of an encoder, and the quantities-of-interest, namely $ p$ and $ q$, are identified using a decoder, which maps the latent variables to $p(\hat{q},\hat{p})$ and $q(\hat{q},\hat{p})$. With the training configuration given in \Cref{appendix:A}, we first demonstrate the learned dynamics in the phase space for three random test initial conditions in \Cref{fig:pend-phase} for the pendulum example, where the figure shows that the learned system is stable and orbiting at the same energy level as the ground truth model.

For comparison, we consider an approach, proposed in \cite{lusch2018deep}, which aims at learning a universal linear embedding, inspired by the Koopman theory.  The approach can be viewed as identifying a suitable observable space using deep learning where the dynamics can be governed by a linear system. Furthermore, the work discusses the approach for discrete systems, which we modify to learn continuous linear dynamical systems for the observable subspace or latent variables. In the following, we briefly summerize the modified version of the approach in \cite{lusch2018deep} for continuous systems. We obtain the time derivatives of the latent variables $z(t)$ by the chain rule $\dot z(t)=\Diff \psi_{x(t)} \dot x(t)$ and use it to obtain the dynamics of the linear model as follows:
\begin{equation}
		\begin{aligned}
			\tilde{\mathcal L}_{\dot z\dot x} =\| \Diff \psi_{x(t)} \dot x(t) - \mathcal{K}z(t) \|,
		\end{aligned}
\end{equation}	
where $\psi$ represents the encoder as in \cref{eq:loss1} and $\mathcal{K} \in \mathbb{R}^{n \times n}$ corresponds to the linear operator. The total loss function for the deep linear embedding is given as
\begin{equation}
	\mathcal L = \mathcal L_{\text{encdec}}   +\mathcal L_{\dot z\dot x} ,
\end{equation}
with the same auto-encoder loss $\mathcal L_{\text{encdec}}$ as in \cref{eq:loss1}.

In \Cref{fig:pend-phase-linear} we compare the linear embedding with the ground truth model, where the plot shows that the learned dynamics do not follow the energy level in the phase space because the model does not preserve the Hamiltonian structure. Furthermore, \Cref{tab:pendulum-loss} shows the final loss values for both models, which shows that both models have similar final loss values because both models were trained with the same hyper-parameters and auto-encoders. This then shows that the learned deep linear embedding model is not converging to the desired dynamics, even though the losses are similar.
Furthermore, in \Cref{fig:pend-time}, we compare time-domain simulations of the identified model with the ground truth model for a random initial condition that is different from the training set. \Cref{fig:pend-time} shows that the learned model is not only good at capturing the dynamics in a test case but also stable and accurate for long-time integration, on a time interval larger than the training interval $[0,10]$.
\begin{figure*}[tb]
	\centering
	\begin{subfigure}[b]{0.3\textwidth}
		\includegraphics[width=1\linewidth]{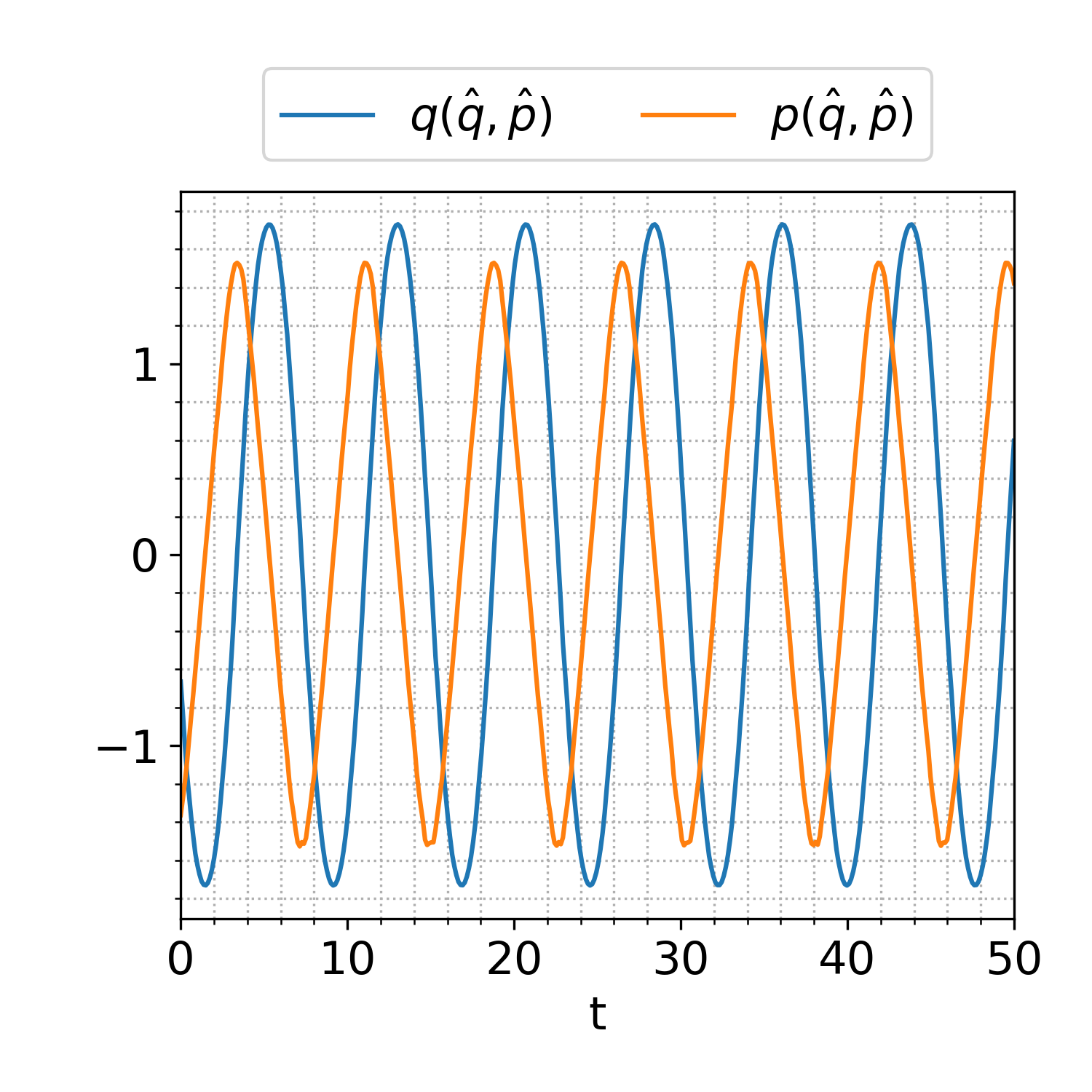}
		\caption{Learned model}
	\end{subfigure}
	\begin{subfigure}[b]{0.3\textwidth}
		\includegraphics[width=1\linewidth]{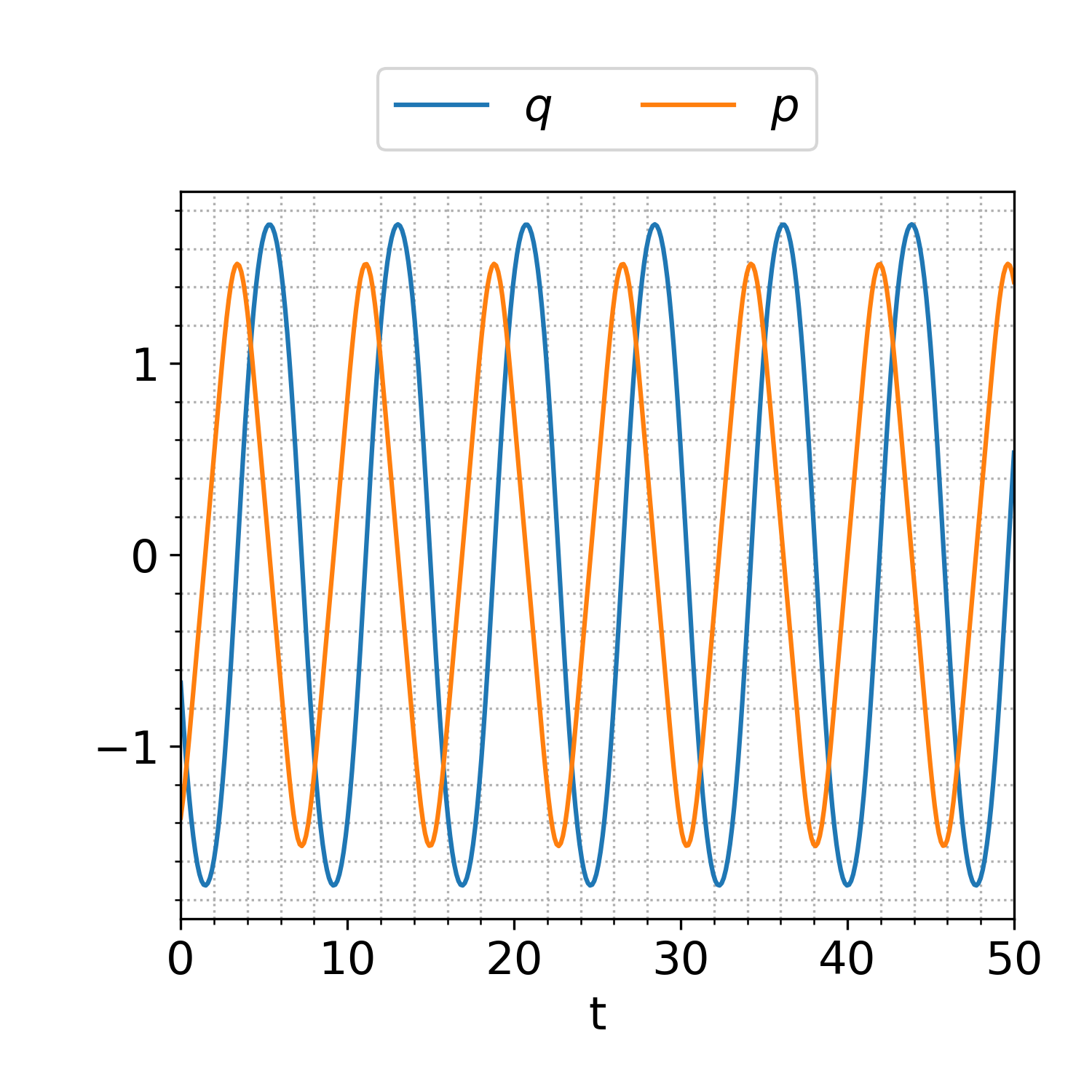}
		\caption{Ground truth}
	\end{subfigure}
	\begin{subfigure}[b]{0.3\textwidth}
		\includegraphics[width=0.91\linewidth]{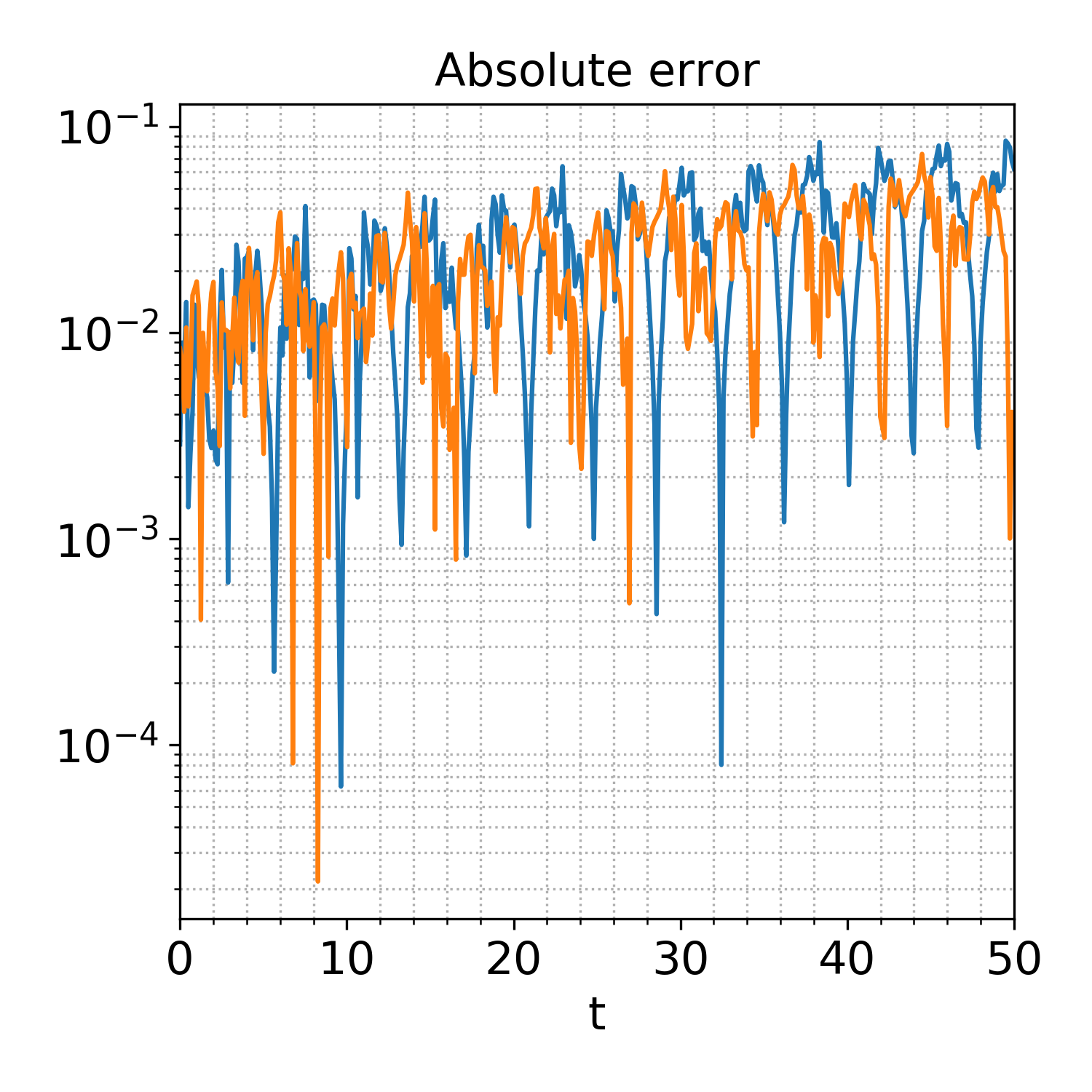}
		\caption{Absolute error}
	\end{subfigure}
	\caption{Nonlinear pendulum: A comparison of the learned model with the ground truth model for a random test condition.}
	\label{fig:pend-time}
\end{figure*}

In \Cref{fig:pend-ham}, we plot the learned latent and ground truth Hamiltonians, demonstrating that all Hamiltonians remain constant over time with minor oscillations. Notably, the learned latent Hamiltonian closely aligns with the ground truth Hamiltonian, even without the need for additional constraints in the optimization process, such as weakly enforcing the initial Hamiltonian value of the ground truth model. 

\begin{table}
\caption{The table contains all the loss values in the final epoch for the pendulum example.}
	\label{tab:pendulum-loss}
	\begin{tabular}{|c|c|c|c|}	
	\hline
   &$\mathcal L_{\text{encdec}}$	& $\tilde{\mathcal L}_{\text{symp}} $ &$\tilde{\mathcal L}_{\dot z\dot x} $ \\ 
   \hline
   Quadratic Hamiltonian System &$3.71\cdot 10^{-3}$	                        &  $ 1.68\cdot 10^{-5} $                                 & $2.10\cdot 10^{-5}$\\
   \hline
   Deep  linear embedding \cite{lusch2018deep}&$1.24\cdot 10^{-3}$&-&$4.58\cdot 10^{-4} $\\
   \hline
   \end{tabular}
\end{table}

\begin{figure*}[tb]
	\centering
	\begin{subfigure}[b]{0.3\textwidth}
		\includegraphics[width=1\linewidth]{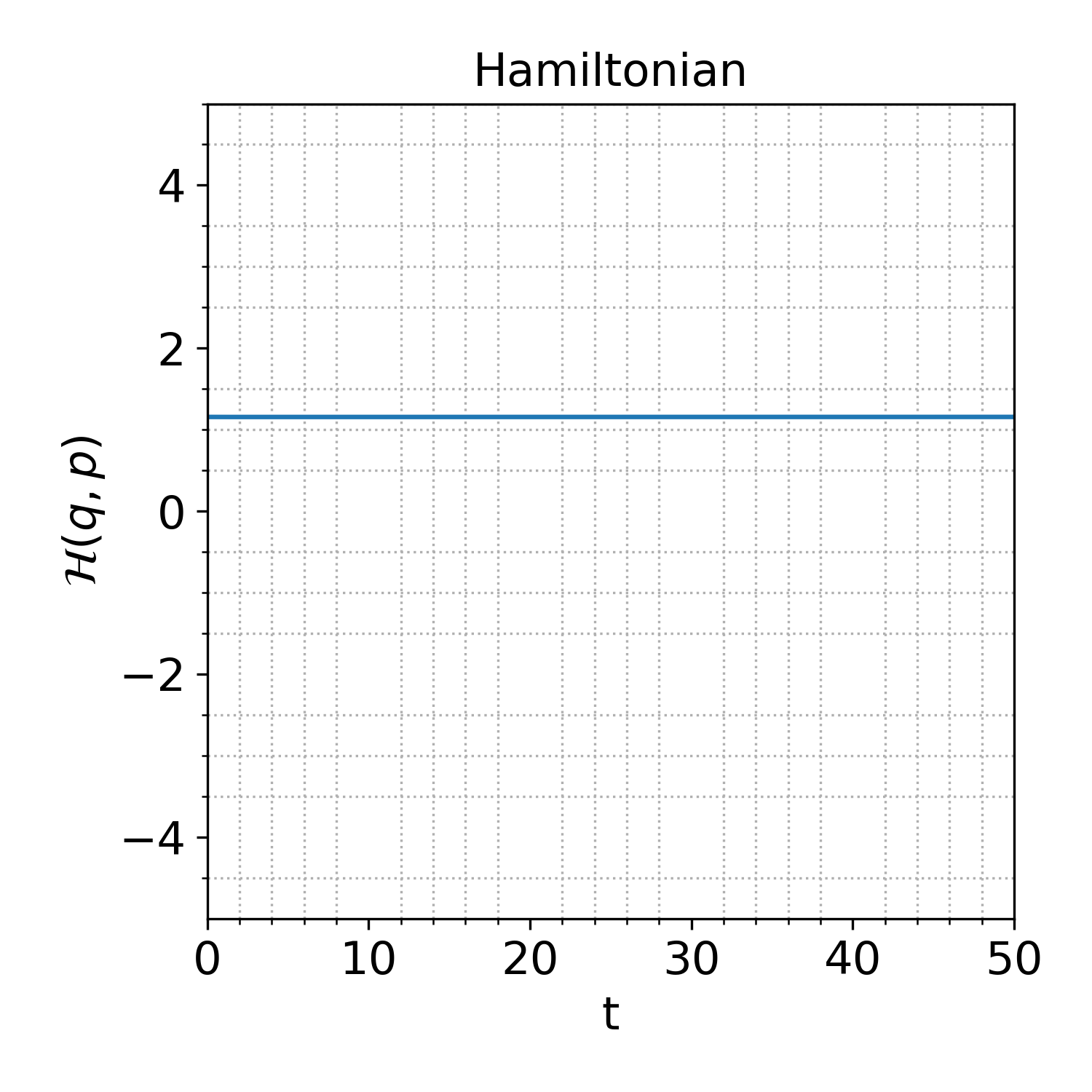}
		\caption{Ground truth}
	\end{subfigure}
	\begin{subfigure}[b]{0.3\textwidth}
		\includegraphics[width=1\linewidth]{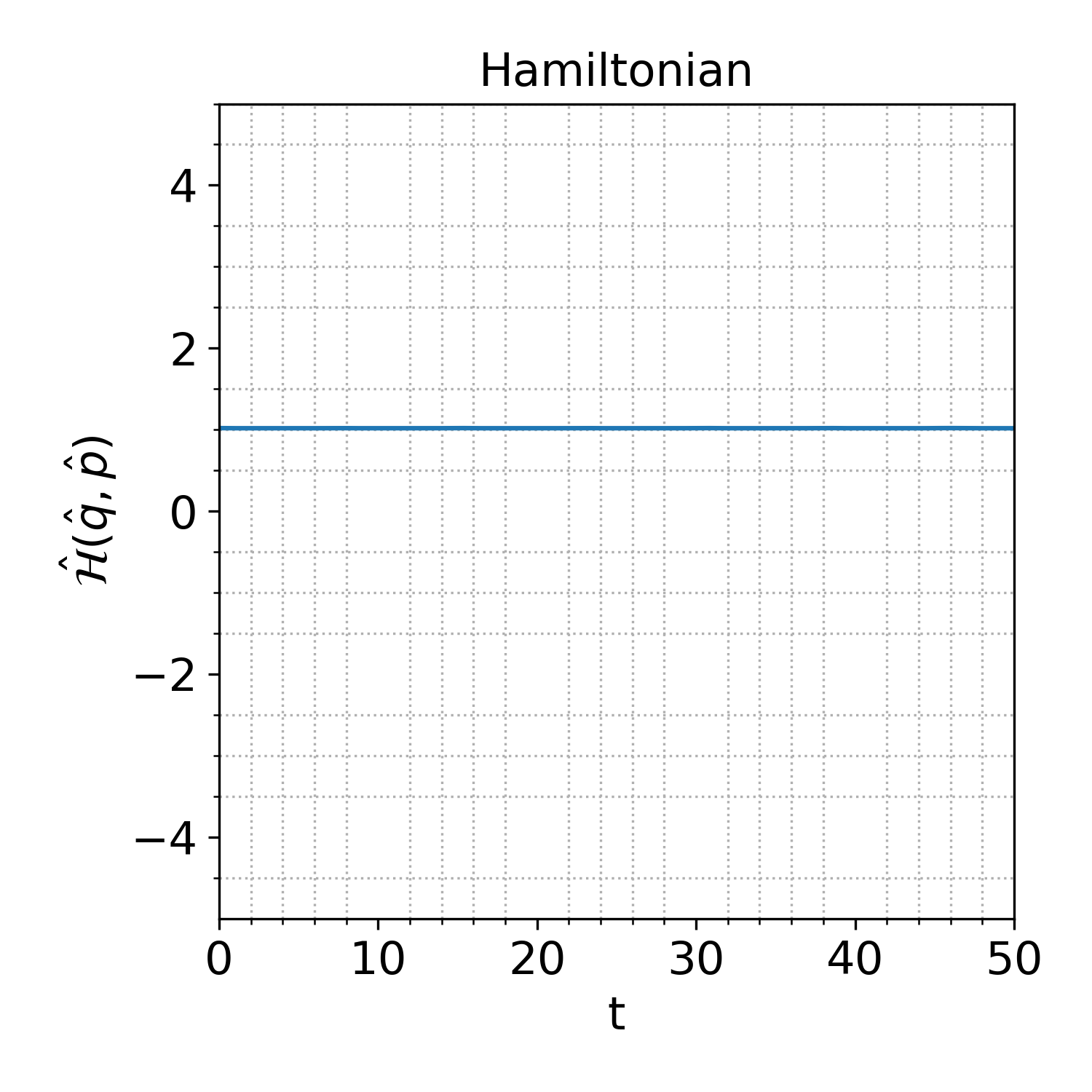}
		\caption{Learned model}
	\end{subfigure}
	\begin{subfigure}[b]{0.3\textwidth}
		\includegraphics[width=1\linewidth]{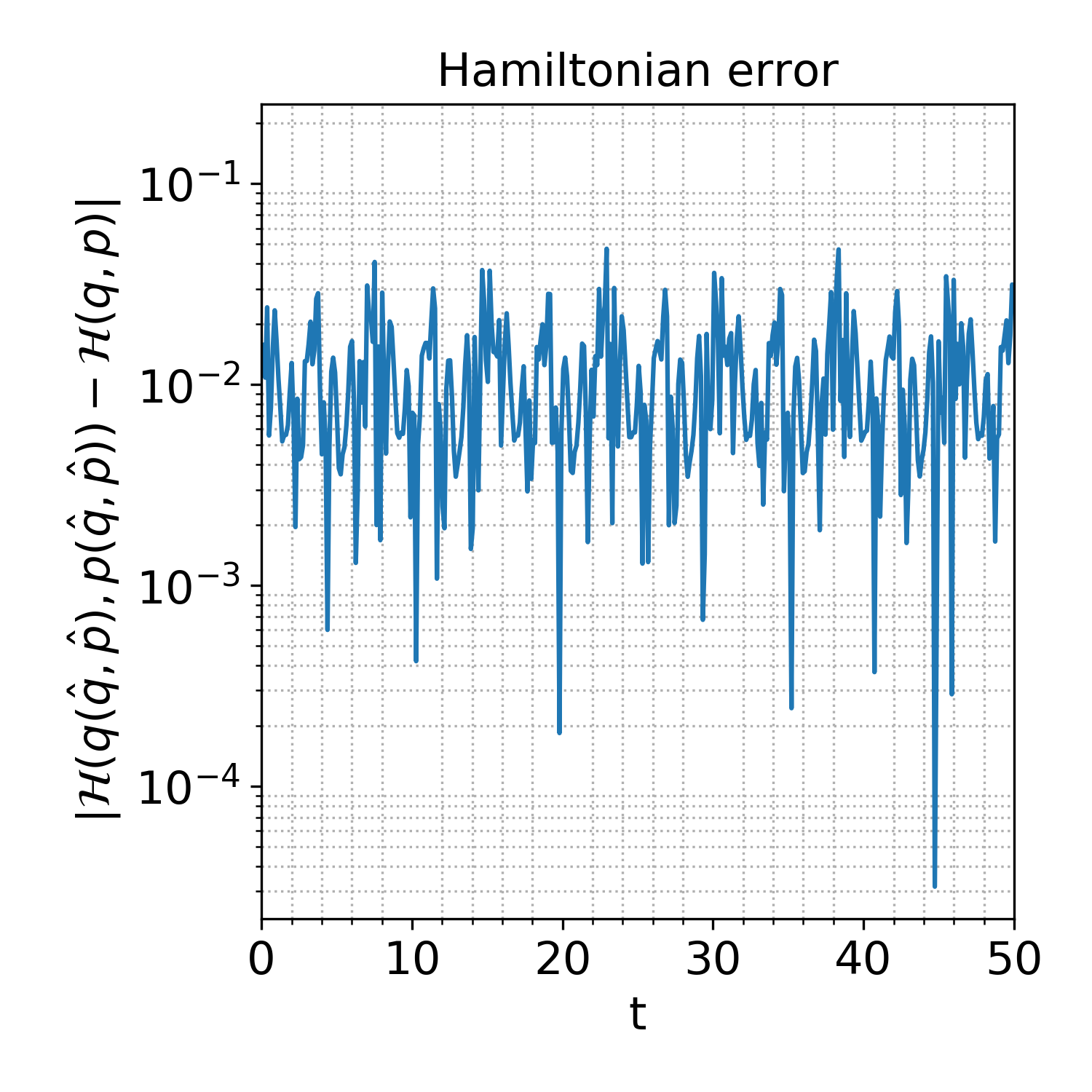}
		\caption{Absolute error}
	\end{subfigure}
	\caption{Nonlinear pendulum: A comparison of the Hamiltonian in canonical coordinates for the ground truth model $\Hamiltonian(q,p)$, the learned Hamiltonian $\hat{\Hamiltonian}(\hat q, \hat p)$ in the latent space, 
		and the difference between the ground truth model and the learned model in the original space $\Hamiltonian(q(\hat q, \hat p), p(\hat q, \hat p))$ along time using a random test initial condition.} 
	\label{fig:pend-ham}
\end{figure*}

\subsubsection{Lotka–Volterra Equations} Our second example of a low-dimensional system is the Lotka-Volterra system \cite{tong2021symplectic}. The Lotka-Volterra model is a well-known mathematical model used to describe predator-prey populations' dynamics. This model has been extensively applied to study the population dynamics of diverse species across various ecosystems. Moreover, it is an example of a system with an underlying Hamiltonian structure with Hamiltonian
\begin{equation*}
\Hamiltonian(q,p) = p-e^{p}+2q-e^q.
\end{equation*}

To learn the dynamics of the Lotka-Volterra equations, we constructed
a training set with $10$ trajectories. These trajectories were simulated up to a time of $T=4$, using a time-step size of $\Delta t=0.2$. For this experiment, we generated trajectories within the energy range $\Hamiltonian(q,p) \in   [-4,4]$.

After learning a suitable embedding with the given set-up in \Cref{appendix:A}, we plot the training data of the Lotka-Volterra equations in phase space in \Cref{fig:lv_trainingdata}. In this example, we focus on trajectories in phase space that do not complete a full orbit of the energy level.
Furthermore, we demonstrate the learned dynamics in the phase space for a random initial value 
in \Cref{fig:lv-phase} for the Lotka-Volterra equations. The figure shows that the learned model is accurate even in terms of predicting the orbit level of random test initial conditions.

\begin{figure*}[tb]
	\centering
	\begin{subfigure}[t]{0.42\textwidth}
		\includegraphics[width=0.8\linewidth]{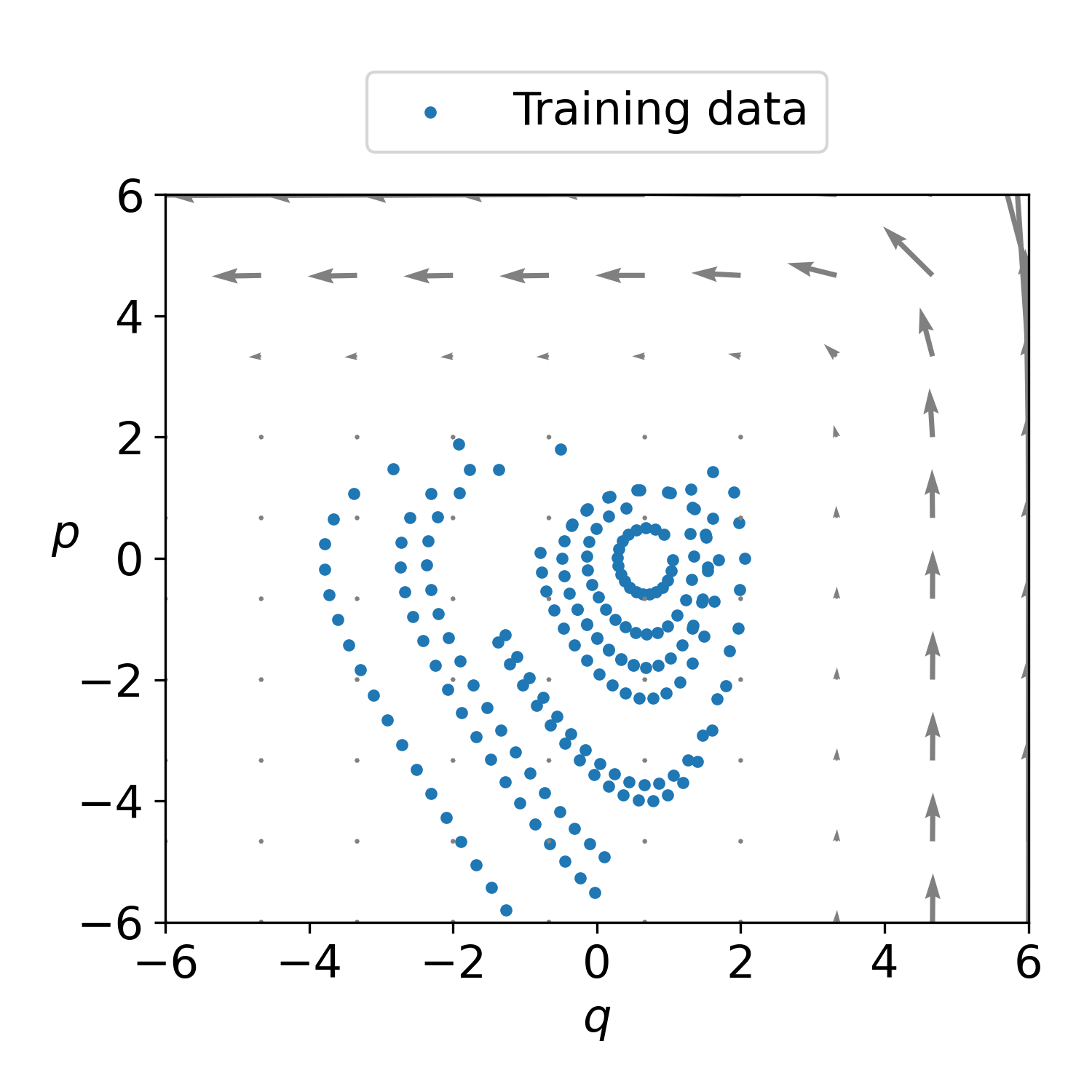}
		\caption{Training data.}
		\label{fig:lv_trainingdata}
	\end{subfigure}
	\begin{subfigure}[t]{0.42\textwidth}
		\includegraphics[width=0.8\linewidth]{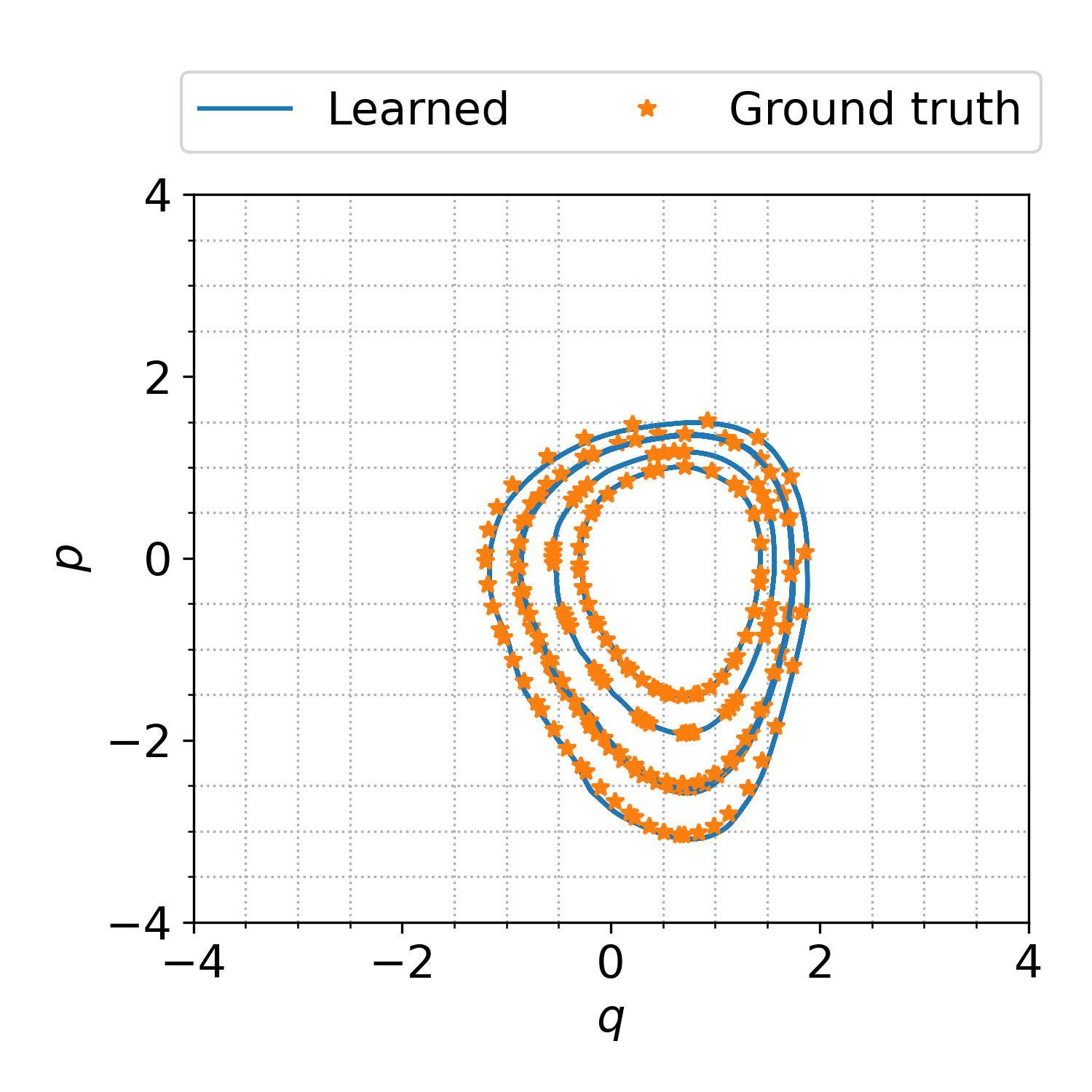}
		\caption{A comparison of the learned model with the ground truth in phase space.}
		\label{fig:lv-phase}
	\end{subfigure}
	\caption{Lotka–Volterra: plot (a) shows training data in phase space, and plot (b) shows a comparison of the learned model with the ground truth in phase space with five random initial test conditions.}
\end{figure*}

In \Cref{fig:lv-time}, we compare time-domain simulations between the learned and ground truth models for the Lotka-Volterra equations, along with the corresponding absolute error. The simulations were conducted using a random initial condition, distinct from the initial trajectories used in the training set. The results depicted in Figure \ref{fig:lv-time} demonstrate a high level of agreement between the dynamics of the Lotka-Volterra equations and the ground truth model, even after the final training time $T = 4$. The absolute error reaches values of order $10^{-1}$, which seems to mainly stem from a phase shift of the learned model. In \Cref{tab:lv-loss}, we show the values of the loss functions in the final epoch, where the values are similar to the pendulum example.  

\begin{table}
	\caption{The table contains all the loss values in the final epoch for the Lotka-Volterra example.}
		\label{tab:lv-loss}
		\begin{tabular}{|c|c|c|}	
			\hline
			$\mathcal L_{\text{encdec}}$	& $\tilde{\mathcal L}_{\text{symp}} $ &$\tilde{\mathcal L}_{\dot z\dot x} $ \\ 
			\hline
			$3.73\cdot 10^{-3}$	                        &  $ 2.00\cdot 10^{-4} $                                 & $8.35\cdot 10^{-4}$\\
			\hline
		\end{tabular}
\end{table}

\begin{figure*}[tb]
	\centering
	\begin{subfigure}[b]{0.3\textwidth}
		\includegraphics[width=1\linewidth]{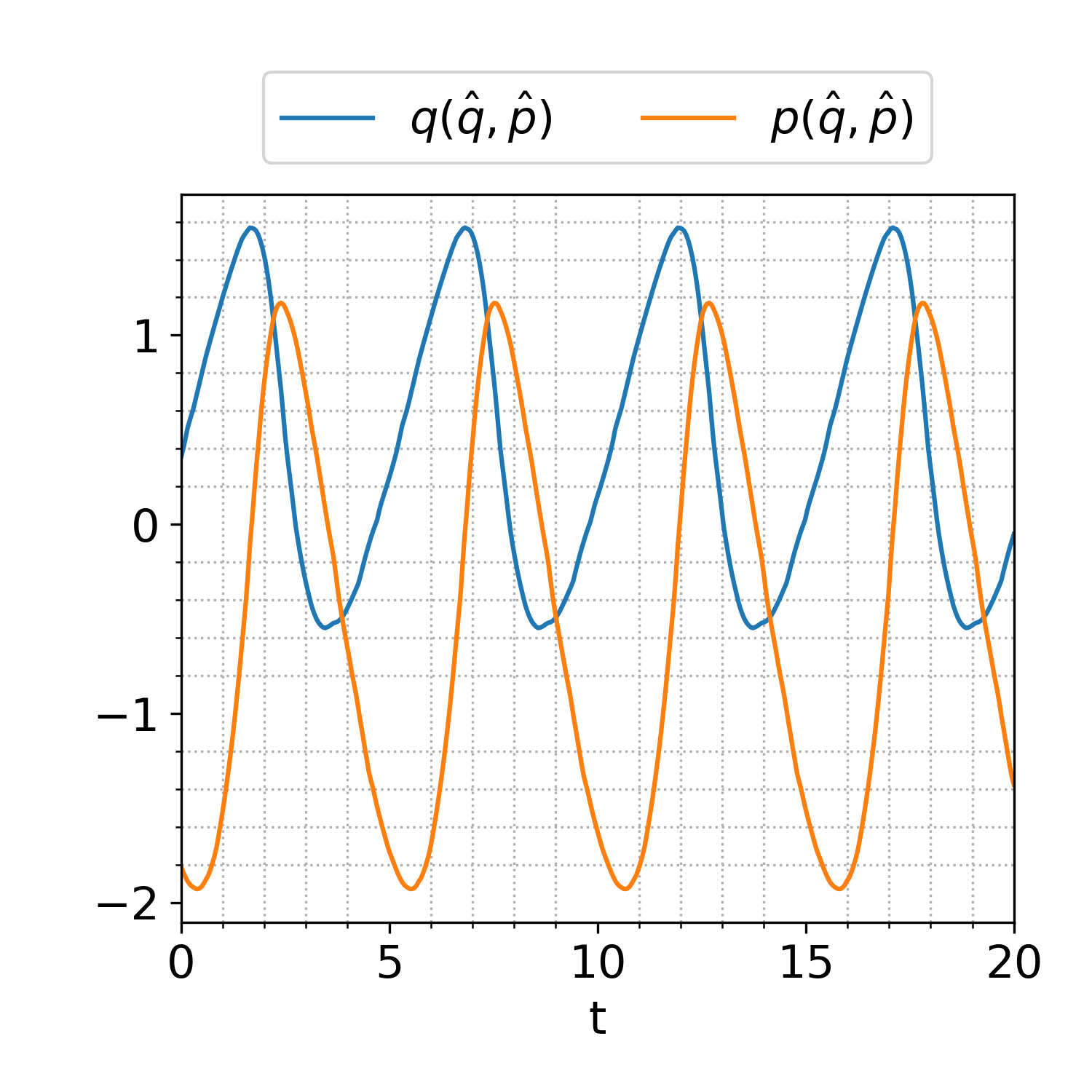}
		\caption{Learned model}
	\end{subfigure}
	\begin{subfigure}[b]{0.3\textwidth}
		\includegraphics[width=1\linewidth]{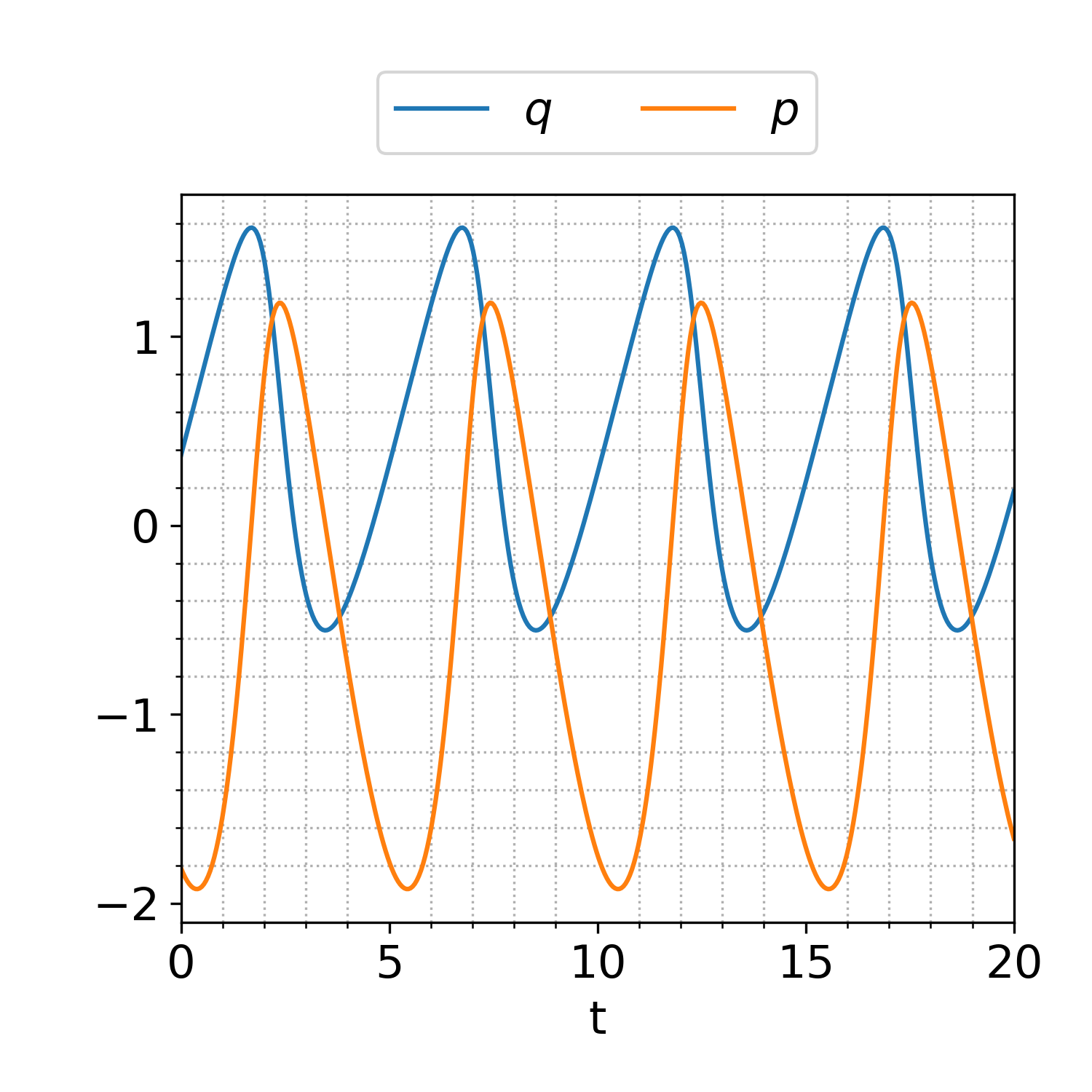}
		\caption{Ground truth}
	\end{subfigure}
	\begin{subfigure}[b]{0.3\textwidth}
		\includegraphics[width=0.91\linewidth]{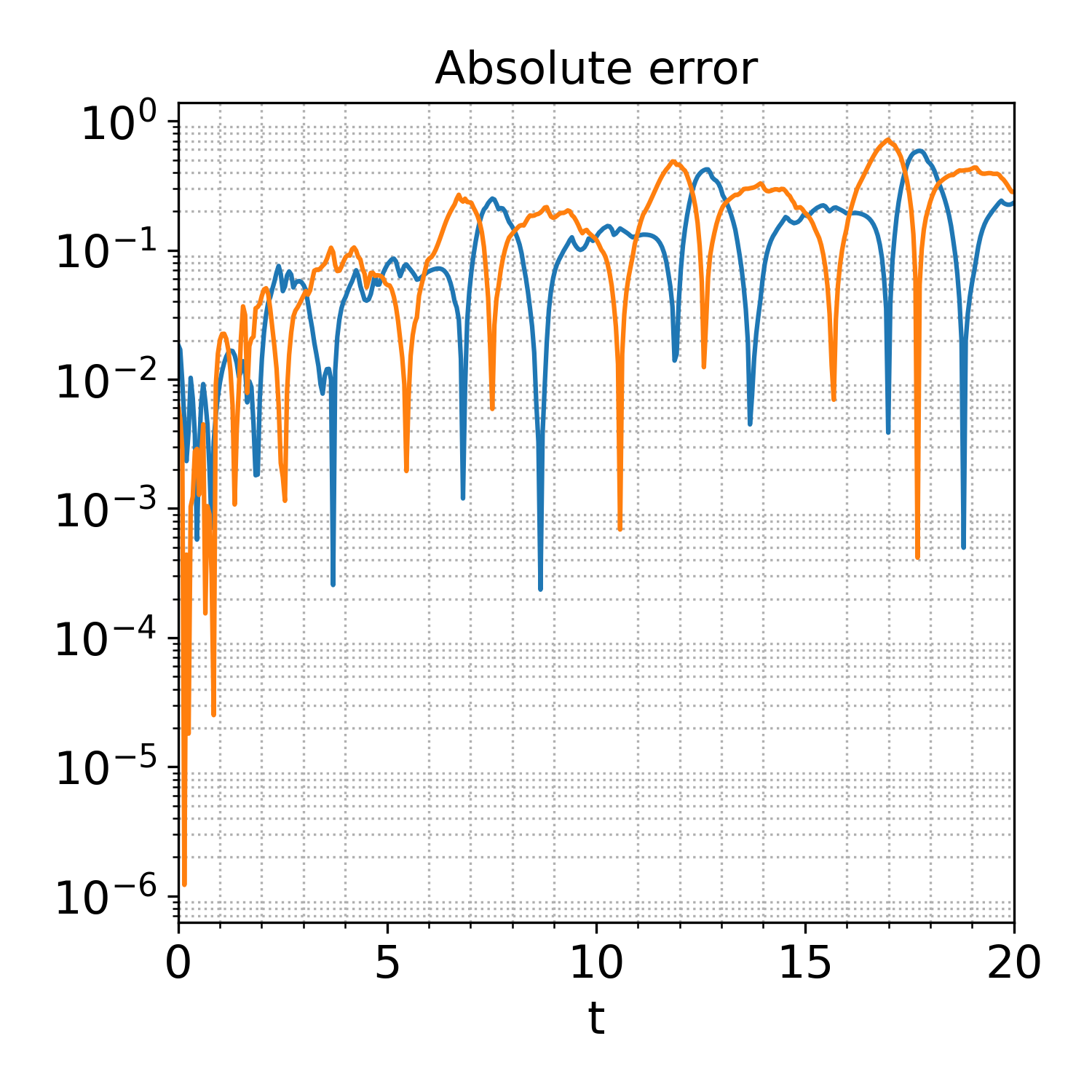}
		\caption{Absolute error}
	\end{subfigure}
	\caption{Lotka–Volterra: A comparison of time-domain simulation obtained using the learned model with the ground truth model 
		for the Lotka-Volterra equations and using a random test initial condition. 
	}
	\label{fig:lv-time}
\end{figure*}
In \Cref{fig:lv-ham}, we present the learned and ground truth Hamiltonians for the Lotka-Volterra equation. Evidently, all Hamiltonians remain constant over time with minor fluctuations. We observe that these fluctuations primarily arise from errors in the autoencoder component, as the symplecticity condition is weakly applied to the encoder part but not the decoder part. Additionally, the fluctuations in the Hamiltonian error could be attributed to the training data, which is constructed from trajectories with a shorter time span compared to the pendulum example.
The constant offset of the Hamiltonians corresponds to a different choice of energy null-level in the original space and the latent space, respectively. Since the Hamiltonian is a relative quantity, the overall performance of the learned model is linked to the error plot between the ground truth Hamiltonian and the learned Hamiltonian in the original space, where the \Cref{fig:lv-ham} shows that they coincide.

\begin{figure*}[tb]
	\centering
	\begin{subfigure}[b]{0.3\textwidth}
		\includegraphics[width=1\linewidth]{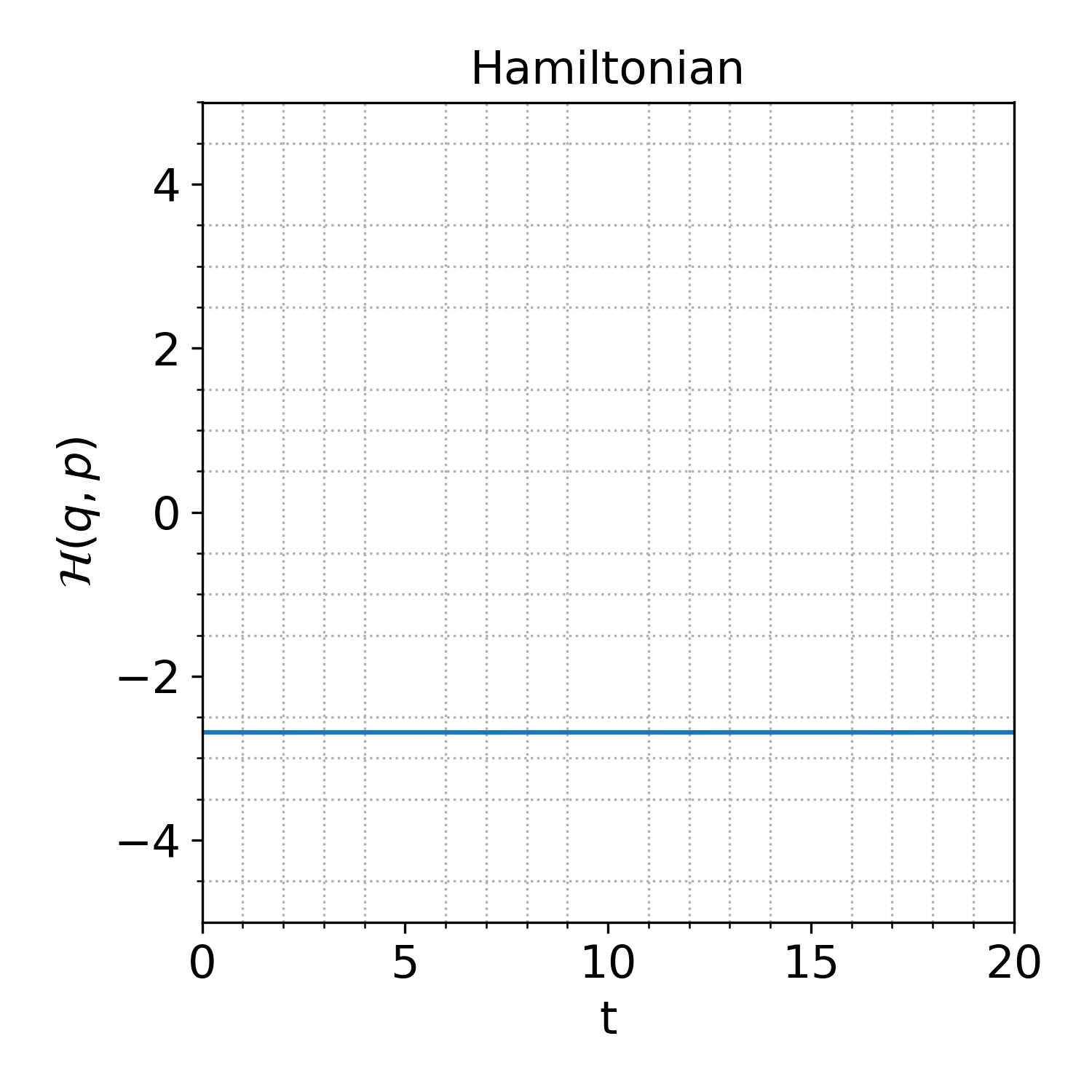}
		\caption{Ground truth}
	\end{subfigure}
	\begin{subfigure}[b]{0.3\textwidth}
		\includegraphics[width=1\linewidth]{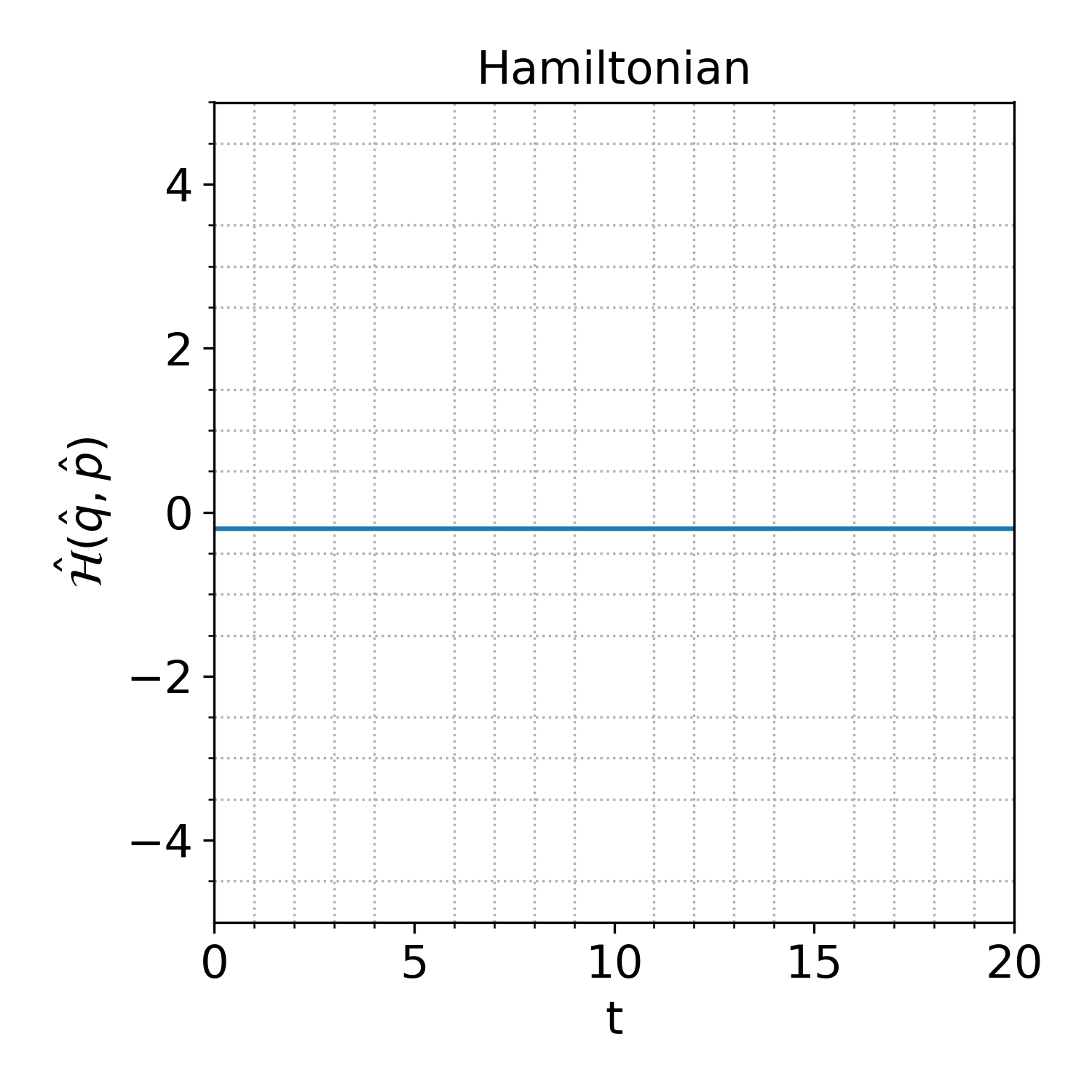}
		\caption{Learned model}
	\end{subfigure}
	\begin{subfigure}[b]{0.3\textwidth}
		\includegraphics[width=1\linewidth]{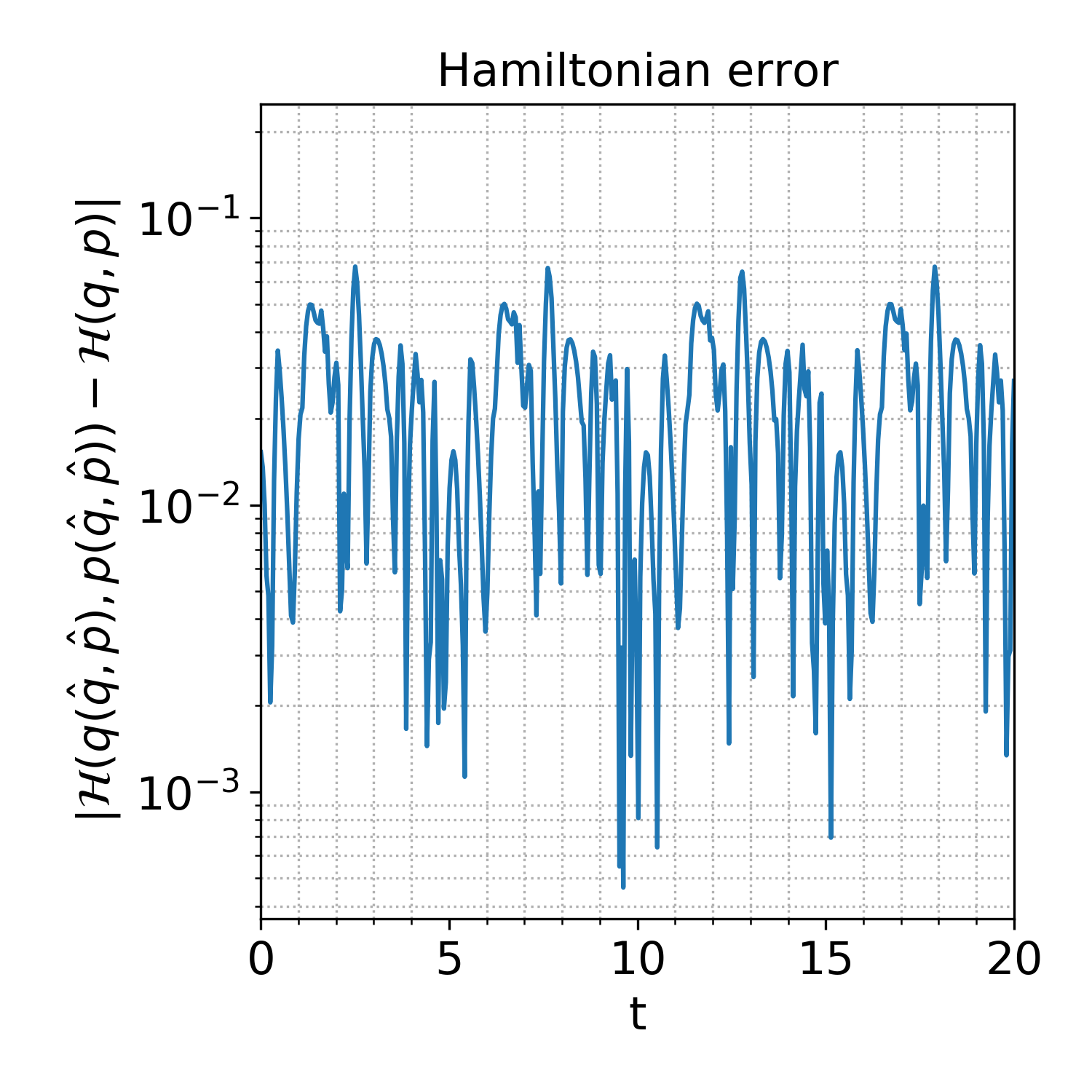}
		\caption{Absolute error}
	\end{subfigure}
	\caption{Lotka–Volterra: A comparison of the Hamiltonian in canonical coordinates for the ground truth model $\Hamiltonian(q,p)$, the learned Hamiltonian $\hat{\Hamiltonian}(\hat q, \hat p)$ in the latent space, 
	and the difference between the ground truth model and the learned model in the original space $\Hamiltonian(q(\hat q, \hat p), p(\hat q, \hat p))$ along time using a random test initial condition.}
	\label{fig:lv-ham}
	
\end{figure*}

\subsubsection{Nonlinear Oscillator} Another low-dimensional example  is a nonlinear (an-harmonic) oscillator with Hamiltonian
\begin{equation}
\Hamiltonian(q,p) = \frac{p^2}{2}+\frac{q^2}{2}+\frac{q^4}{4},
\end{equation}
where the natural frequency and the mass of the oscillator are considered to be unity.

To learn the dynamics from data, we initially generated 
$20$ random trajectories which are simulated up to final time $T=4$ with step-size $\Delta t=0.14$. The generated trajectories are in the energy range of 
$\Hamiltonian(q,p) \in   [-1, 1]$ for this task. In \Cref{fig:no_trainingdata}, we plot the training data of the nonlinear oscillator in phase space.

Having learned the desired embeddings, we next present a comparison of the learned model over three random initial points, which are different from the initial conditions used for training in \Cref{fig:oscil-phase}, 
where the model captures the dynamics of the learned model with good accuracy.
\begin{figure*}[tb]
	\centering
	\begin{subfigure}[t]{0.3\textwidth}
		\includegraphics[width=1\linewidth]{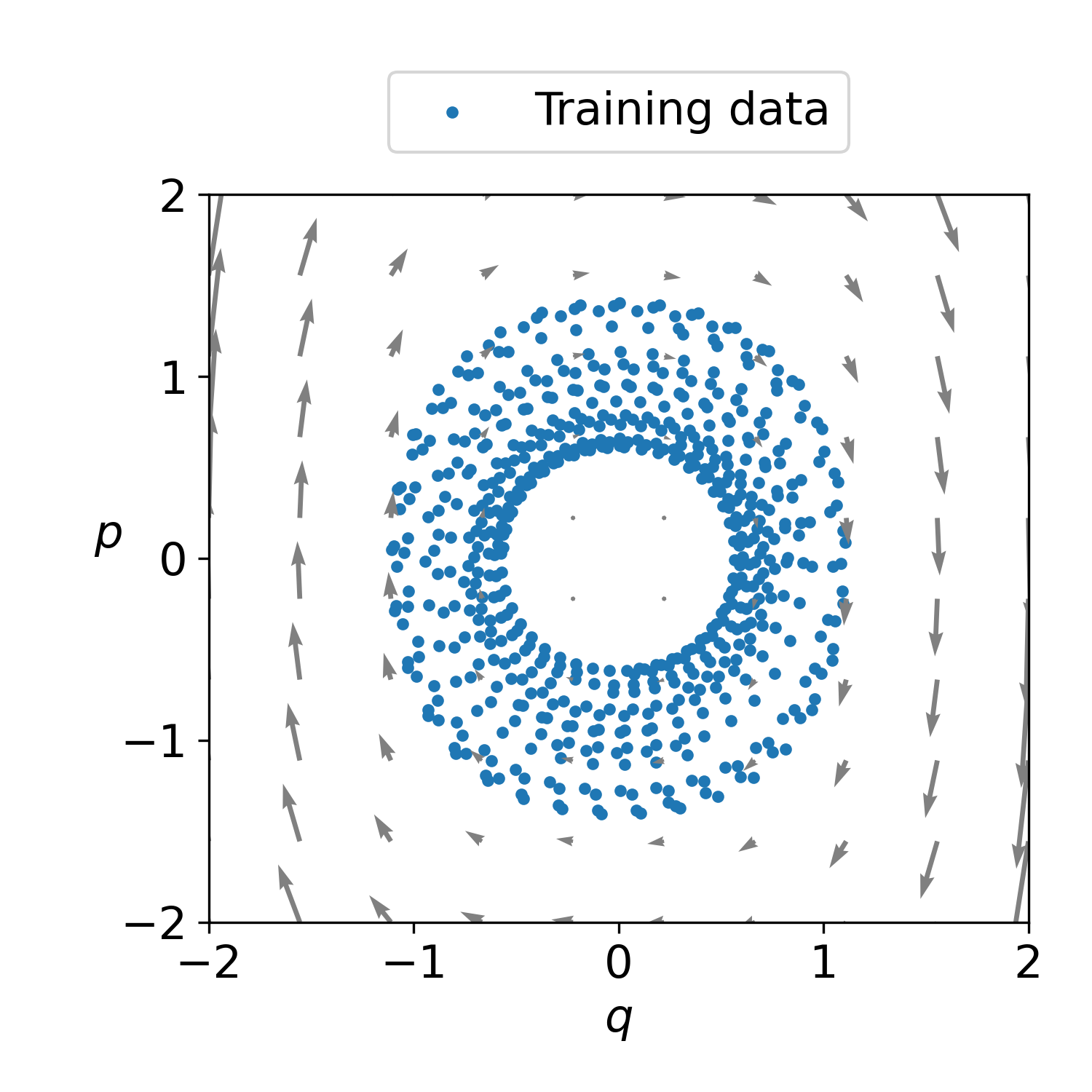}
		\caption{Training data.}
		\label{fig:no_trainingdata}
	\end{subfigure}
	\begin{subfigure}[t]{0.3\textwidth}
		\includegraphics[width=1\linewidth]{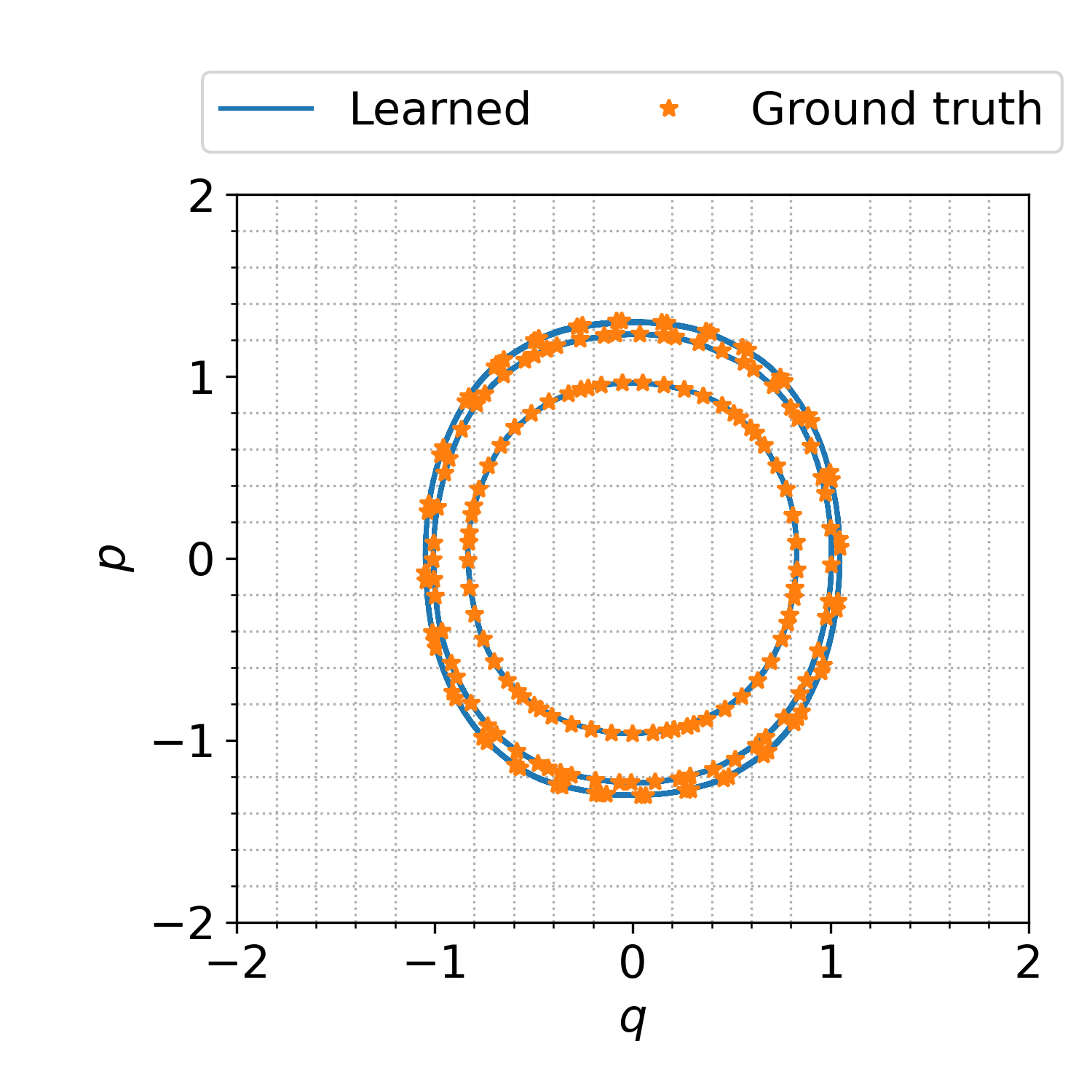}
		\caption{A comparison of the learned model with the ground truth in phase space.}
		\label{fig:oscil-phase}
	\end{subfigure}
	\begin{subfigure}[t]{0.3\textwidth}
		\includegraphics[width=1\linewidth]{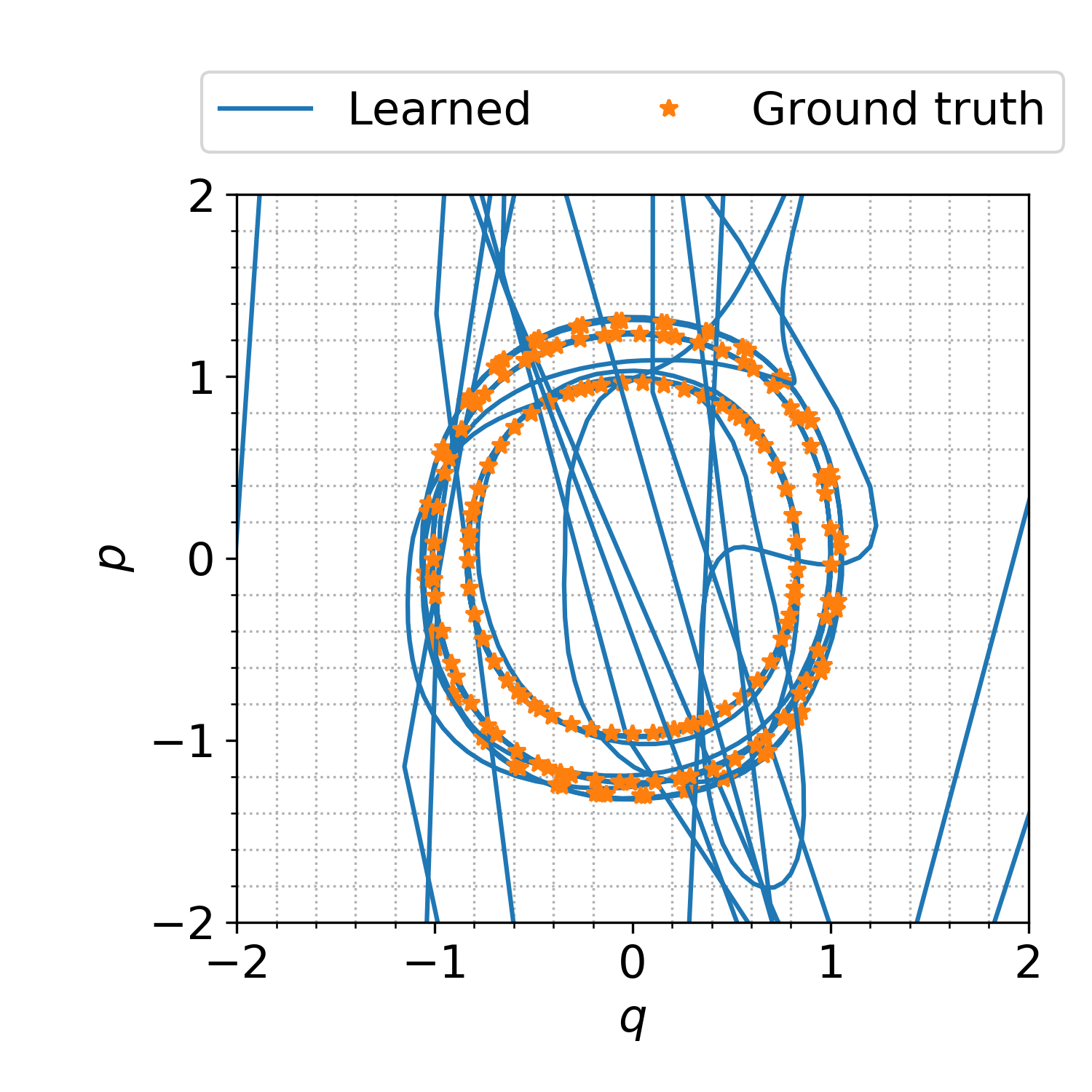}
		\caption{A comparison of the learned model with the ground truth in phase space, using a non-symplectic auto-encoder.}
		\label{fig:oscil-phase_nosymp}
	\end{subfigure}
	\caption{Nonlinear oscillator: Plot (a) shows training data in phase space, and Plot (b) shows a comparison of the learned model with the ground truth in phase space with three random initial test conditions. Plot (c) shows a comparison of the learned model with the ground truth in phase space with the same three random initial test conditions, where the symplecticity condition of the auto-encoder is not enforced.}
\end{figure*}
In \Cref{fig:oscil-time}, we demonstrate the temporal evolution of the learned model, 
the ground truth model for a nonlinear oscillator, 
and the corresponding absolute error in the time domain for a randomly chosen initial condition. The figure shows that the dynamics are well captured
over a long time horizon, exceeding the final training time $T=4$.

\begin{table}
	\caption{The table contains all the loss values in the final epoch for the nonlinear (an-harmonic) oscillator example. The first row shows the results for weakly symplectic auto-encoders, and the second row shows the results when the weak symplecticity condition is not enforced on the auto-encoder. }
		\label{tab:no-loss}
	\begin{tabular}{|c|c|c|c|}	
			\hline
			&$\mathcal L_{\text{encdec}}$	& $\tilde{\mathcal L}_{\text{symp}} $ &$\tilde{\mathcal L}_{\dot z\dot x} $ \\ 
			\hline
			symplectic &$5.67\cdot 10^{-4}$	     &  $ 4.87\cdot 10^{-5} $     & $4.72\cdot 10^{-5}$\\
			\hline
			non-symplectic &$1.08\cdot 10^{-3}$	     &  $ 6.33\cdot 10^{-1} $       & $6.69\cdot 10^{-5}$\\
			\hline
	\end{tabular}
\end{table}

\begin{figure*}[tb]
	\centering
	\begin{subfigure}[b]{0.3\textwidth}
		\includegraphics[width=1\linewidth]{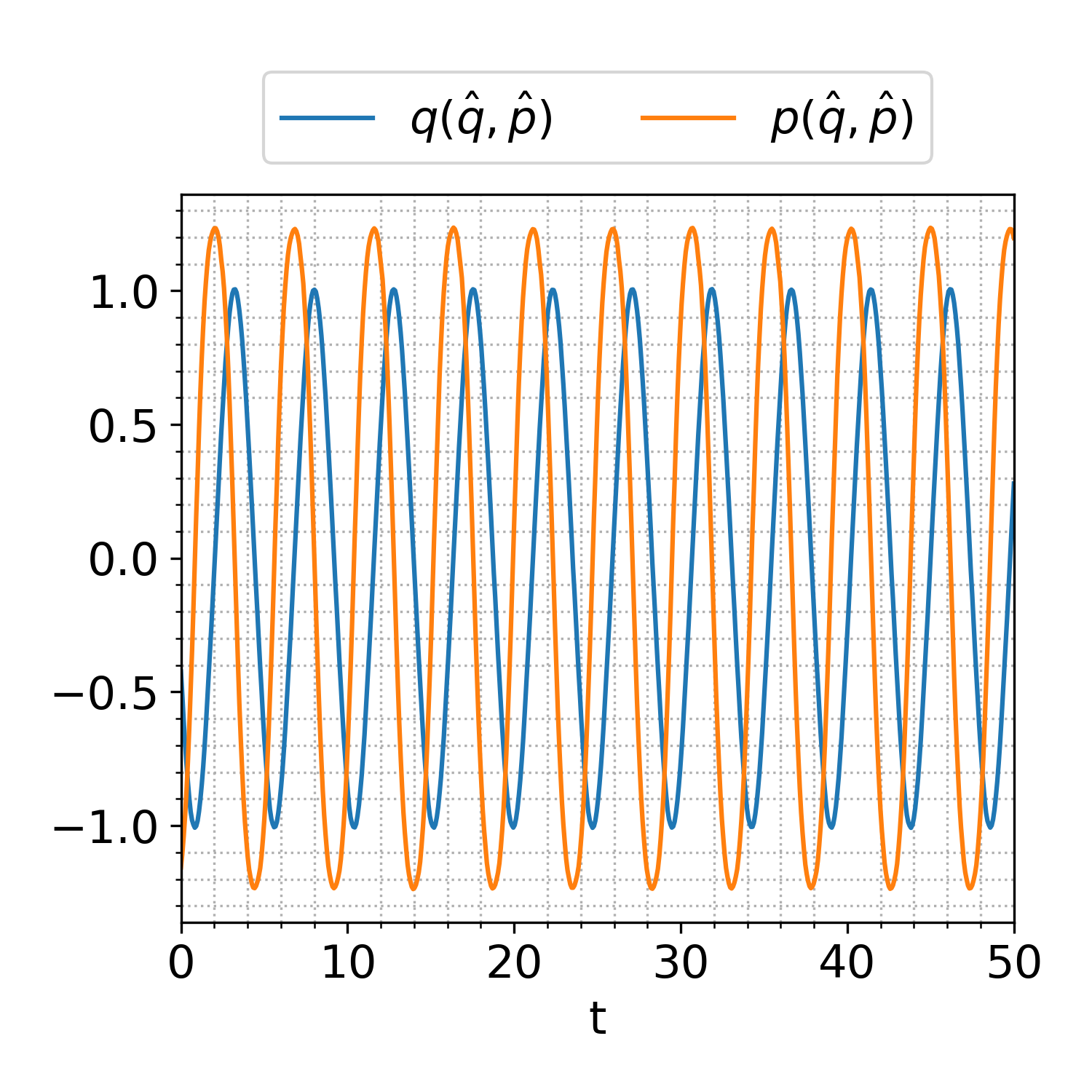}
		\caption{Learned model}
	\end{subfigure}
	\begin{subfigure}[b]{0.3\textwidth}
		\includegraphics[width=1\linewidth]{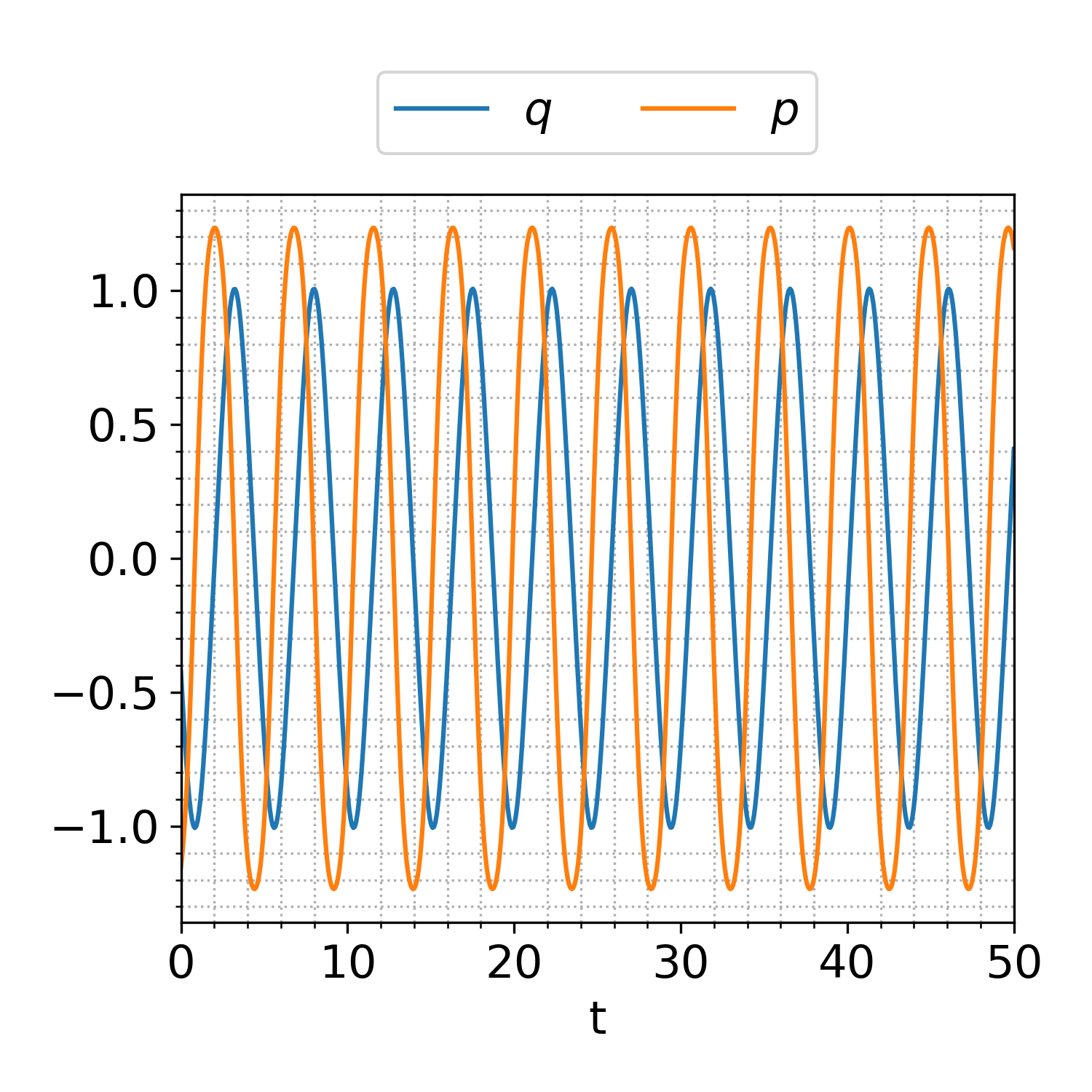}
		\caption{Ground truth}
	\end{subfigure}
	\begin{subfigure}[b]{0.3\textwidth}
		\includegraphics[width=0.91\linewidth]{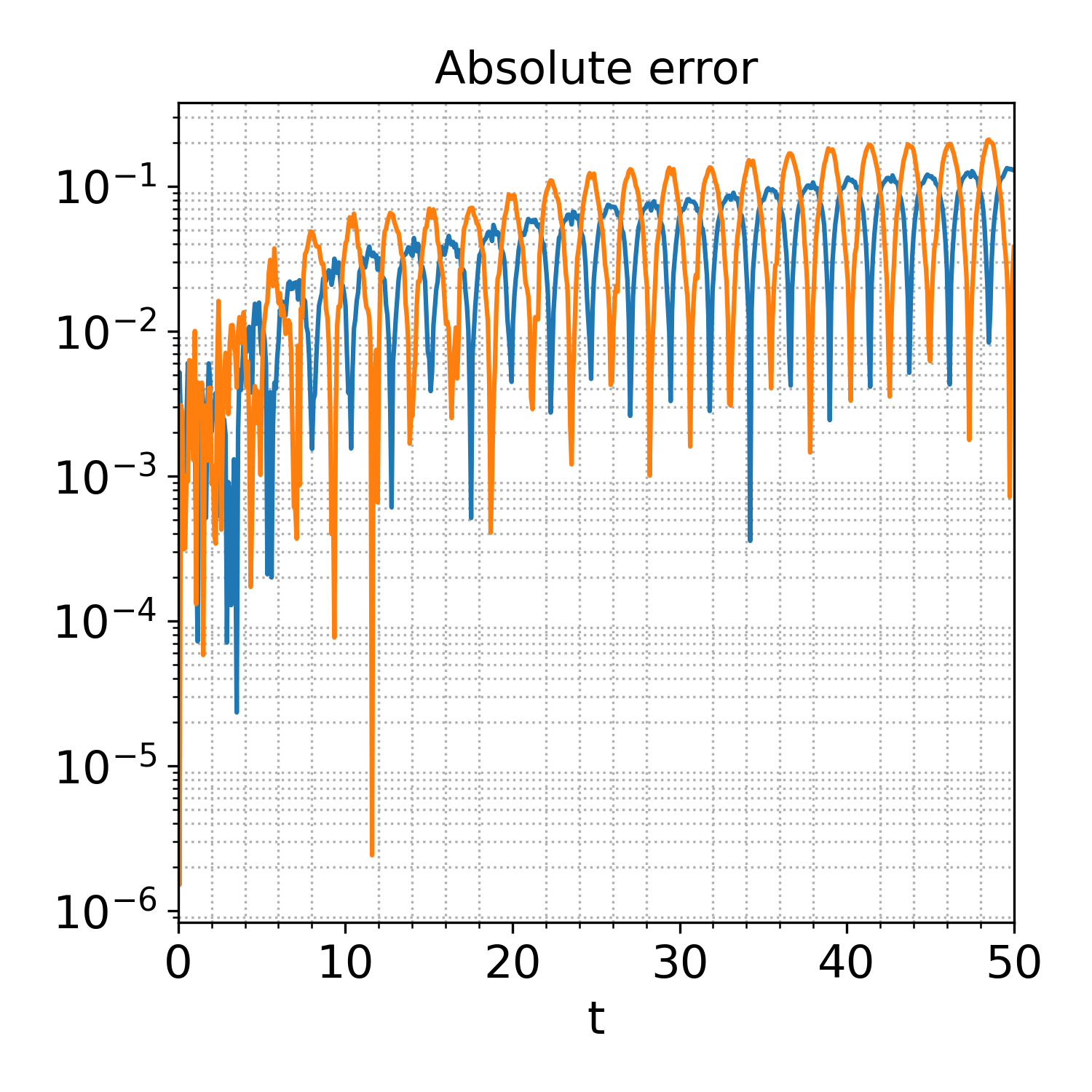}
		\caption{Absolute error}
	\end{subfigure}
	\caption{Nonlinear oscillator: Comparison of the learned model with the ground truth model 
		for the harmonic oscillator over time using a random initial condition.}
	\label{fig:oscil-time}
\end{figure*}
Furthermore, in \Cref{fig:oscil-ham}, we plot the learned and canonical Hamiltonians for the nonlinear oscillator,
which demonstrates that all Hamiltonians remain constant over time with minor fluctuations, as seen in previous examples.
\begin{figure*}[tb]
	\centering
	\begin{subfigure}[b]{0.3\textwidth}
		\includegraphics[width=1\linewidth]{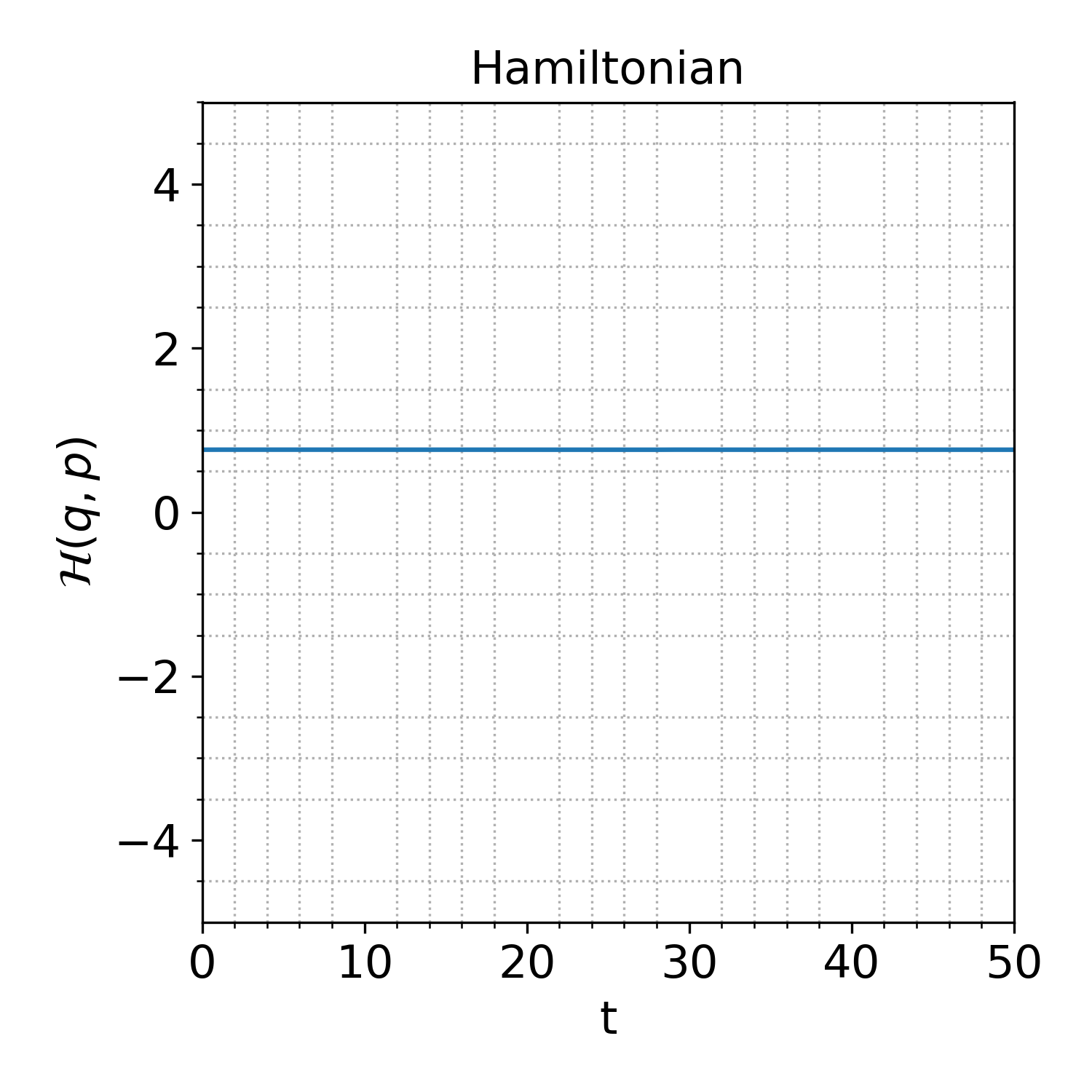}
		\caption{Ground truth}
	\end{subfigure}
	\begin{subfigure}[b]{0.3\textwidth}
		\includegraphics[width=1\linewidth]{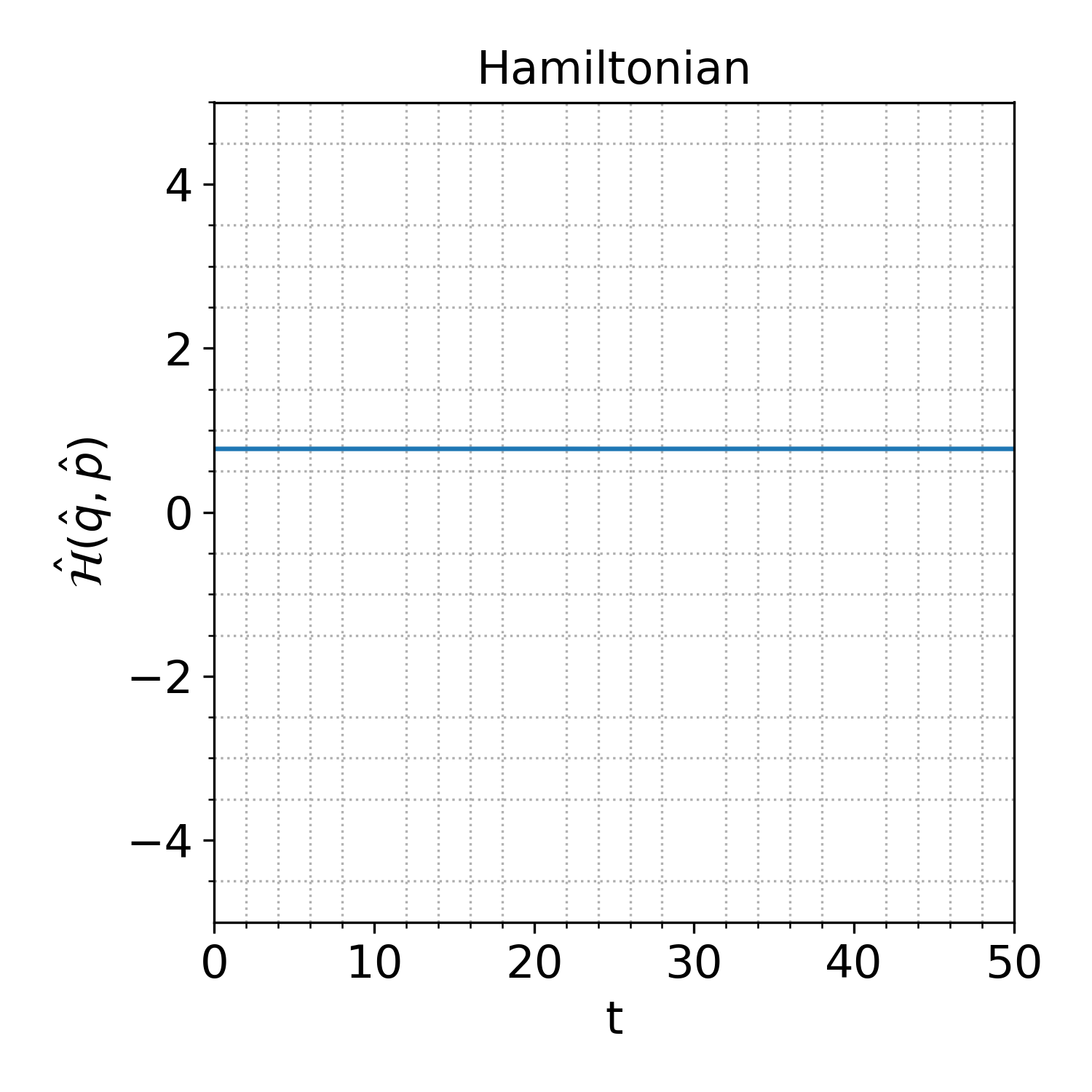}
		\caption{Learned model}
	\end{subfigure}
	\begin{subfigure}[b]{0.3\textwidth}
		\includegraphics[width=1\linewidth]{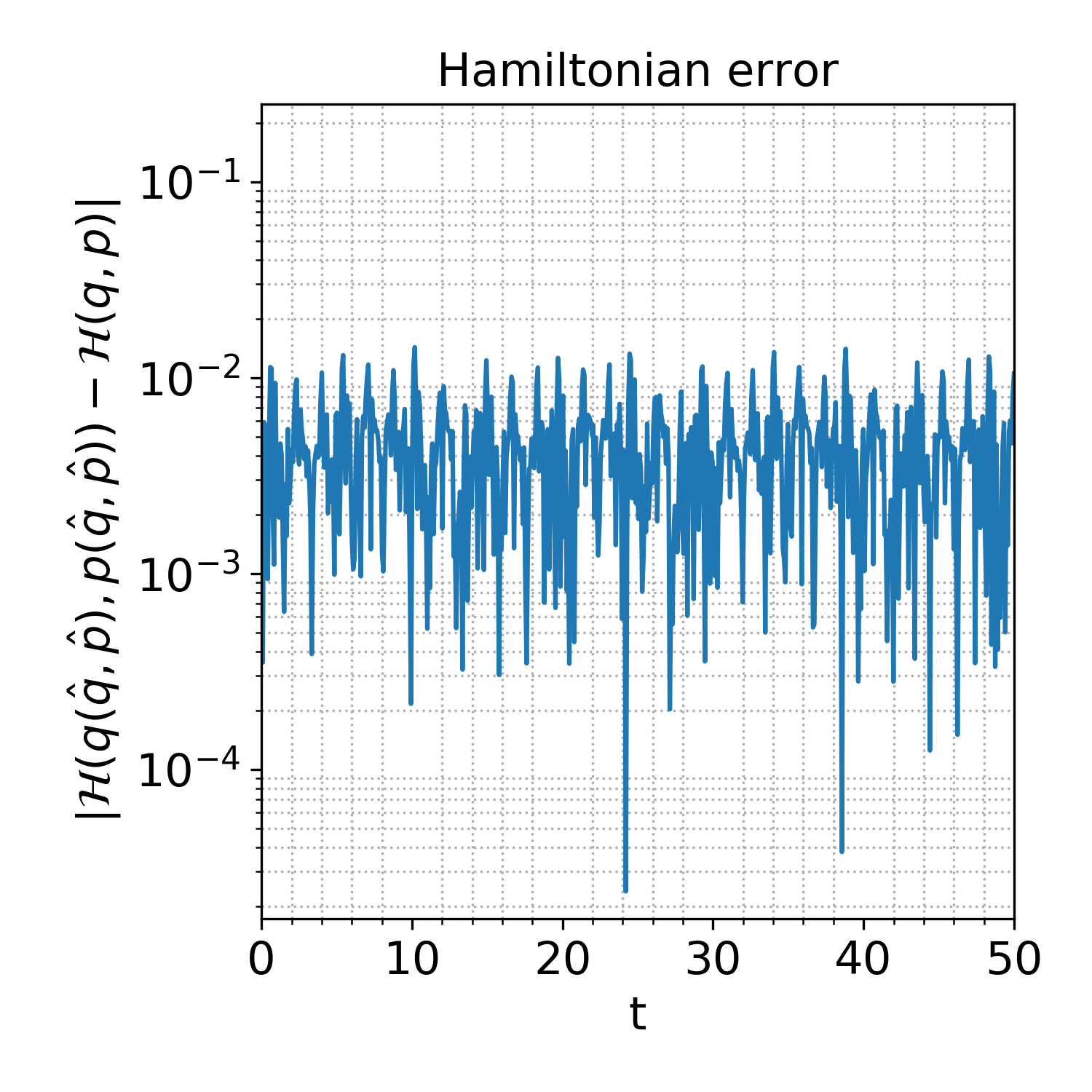}
		\caption{Absolute error}
	\end{subfigure}
	\caption{Nonlinear oscillator: A comparison of the Hamiltonian in canonical coordinates for the ground truth model $\Hamiltonian(q,p)$, the learned Hamiltonian $\hat{\Hamiltonian}(\hat q, \hat p)$ in the latent space, 
	and the difference between the ground truth model and the learned model in the original space $\Hamiltonian(q(\hat q, \hat p), p(\hat q, \hat p))$ along time using a random test initial condition.}
	\label{fig:oscil-ham}
\end{figure*}

To visualize the importance of the symplectic structure of the auto-encoder to accurately learn the system dynamics as a quadratic Hamiltonian system, we learn the an-harmonic oscillator in the same setting, except with the weight for the symplecticity condition of the auto-encoder in \eqref{eq:total_loss} set to $\lambda_2 = 0$. This leads to an unstable system, c.f. \Cref{fig:oscil-phase_nosymp}, despite the solution being periodic. The learned systems follows the true solution for a few periods, but then becomes unstable. This shows that the structure of a weakly-enforced symplectic auto-encoder together with the Hamiltonian dynamics in the latent space is needed to accurately learn and extrapolate the dynamics of the nonlinear oscillator. Moreover, \Cref{tab:no-loss} demonstrates the training performance of both cases, where the table shows that although ground truth and learned model are both canonical Hamiltonian systems, the auto-encoder is not symplectic for $\lambda_2 = 0$.    
\subsubsection{Wooden Pendulum Clock}
In our final low-dimensional example, we consider a real-world experiment involving a pendulum clock, as detailed in the referenced study \cite{choudhary2021forecasting}. The pendulum's oscillations are governed by a descending weight and a deadbeat escapement, which compensates energy loss, leading to a nearly constant frequency and amplitude of motion. The pendulum's angles and angular velocities are obtained from a video that records its swings for 100 seconds at 30 frames per second. This is achieved through video tracking the ends of the compound pendulum. The angles are derived from the coordinate differences using trigonometry, and the angular velocities are estimated using finite differences with a Savgol filter \cite{savitzky1964smoothing}. We used these angle coordinate data from the experiment, which are available on the GitHub page of the study \cite{choudhary2020}.

To train our model, we used $100$ discrete time instances of the given trajectory, spanning the training time domain $[0,3.314]$. Furthermore, in this example, we compared the learned dynamics of the pendulum clock across different latent dimensions, using the same parameters for training. In \Cref{fig:pend_wood-exp-time}, we display the experimental angles $q$ and corresponding angular velocities $p$, which shows that the experimental data used for training contains some measurement errors. In order to compare the dynamics learned through various latent dimensions, we trained our model using both the latent dimensions $2$ and $4$. The reconstructed angles and angular velocities for these dimensions are shown in \Cref{fig:pend_wood-lat2-time} and \Cref{fig:pend_wood-lat4-time} for latent dimensions $2$ and $4$, respectively. \Cref{fig:pend_wood-time} illustrates that both models yield results similar to the experimental data. However, the accuracy of the methods becomes more apparent when observing the absolute errors between the experimental and learned dynamics, as shown in \Cref{fig:pend_wood-time-abs}.  Finally, we present the experimental data and learned dynamics for latent dimensions $2$ and $4$ in the phase space in \Cref{fig:pend_wood-phase}. This shows that although the experimental dynamics are dissipative, the learned model is not affected by dissipation.

\begin{figure*}[tb]
	\centering
	\begin{subfigure}[b]{0.3\textwidth}
		\includegraphics[width=1\linewidth]{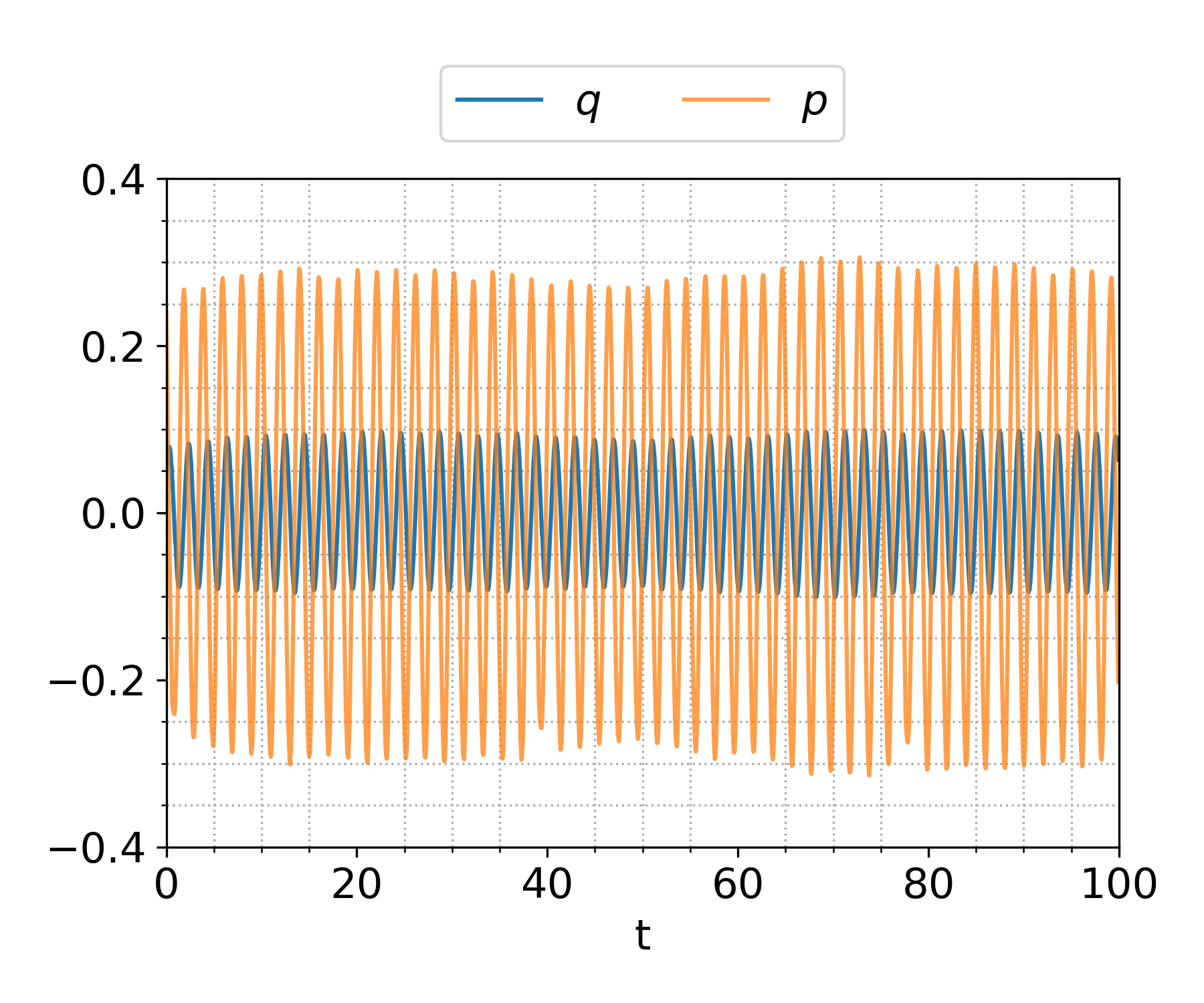}
		\caption{Experimental data\\   \vspace*{4mm}}
		\label{fig:pend_wood-exp-time}
	\end{subfigure}
	\begin{subfigure}[b]{0.3\textwidth}
		\includegraphics[width=1\linewidth]{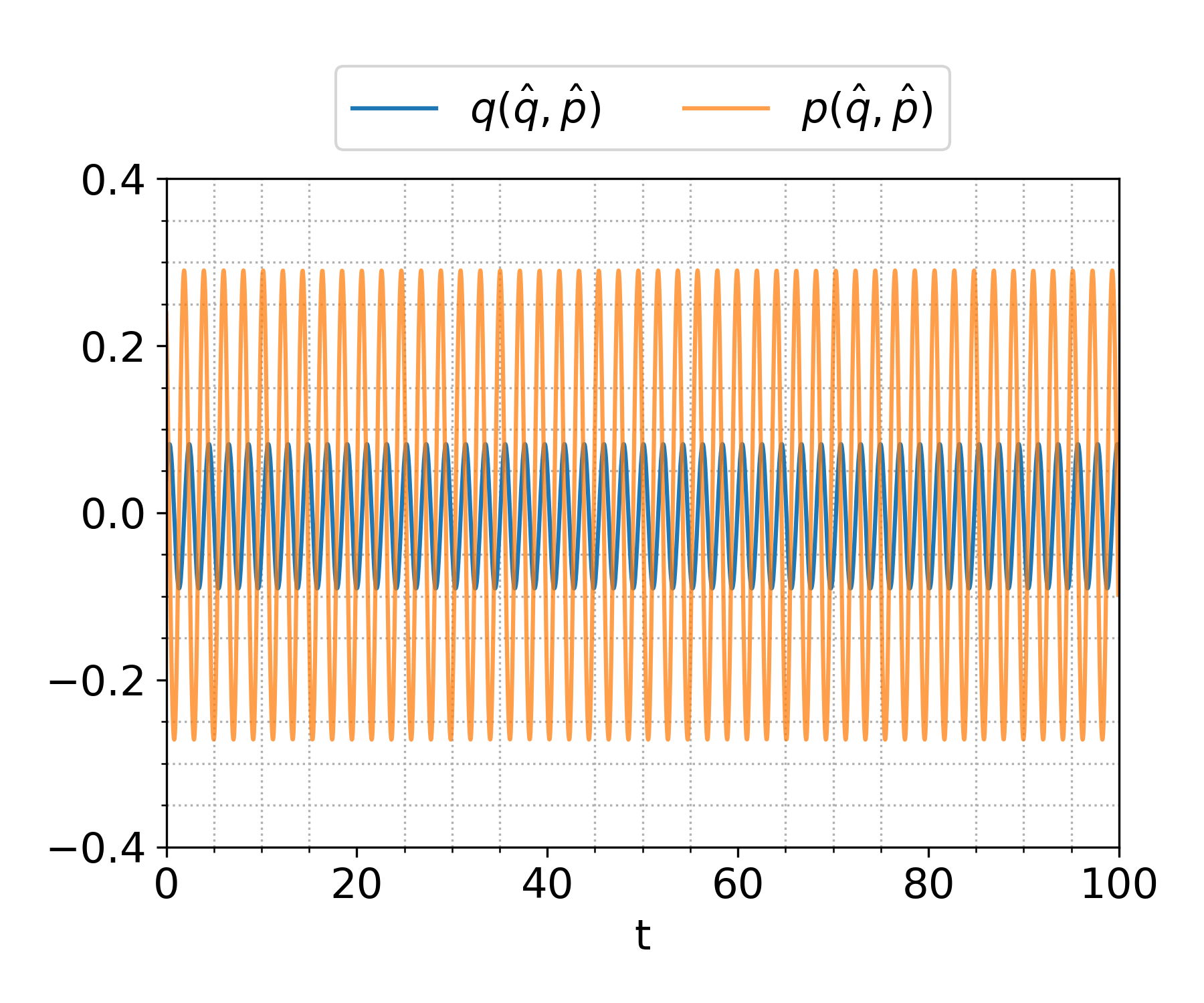}
		\caption{Learned model with\\ \quad latent dimension $2$}
		\label{fig:pend_wood-lat2-time}
	\end{subfigure}
	\begin{subfigure}[b]{0.3\textwidth}
		\includegraphics[width=1\linewidth]{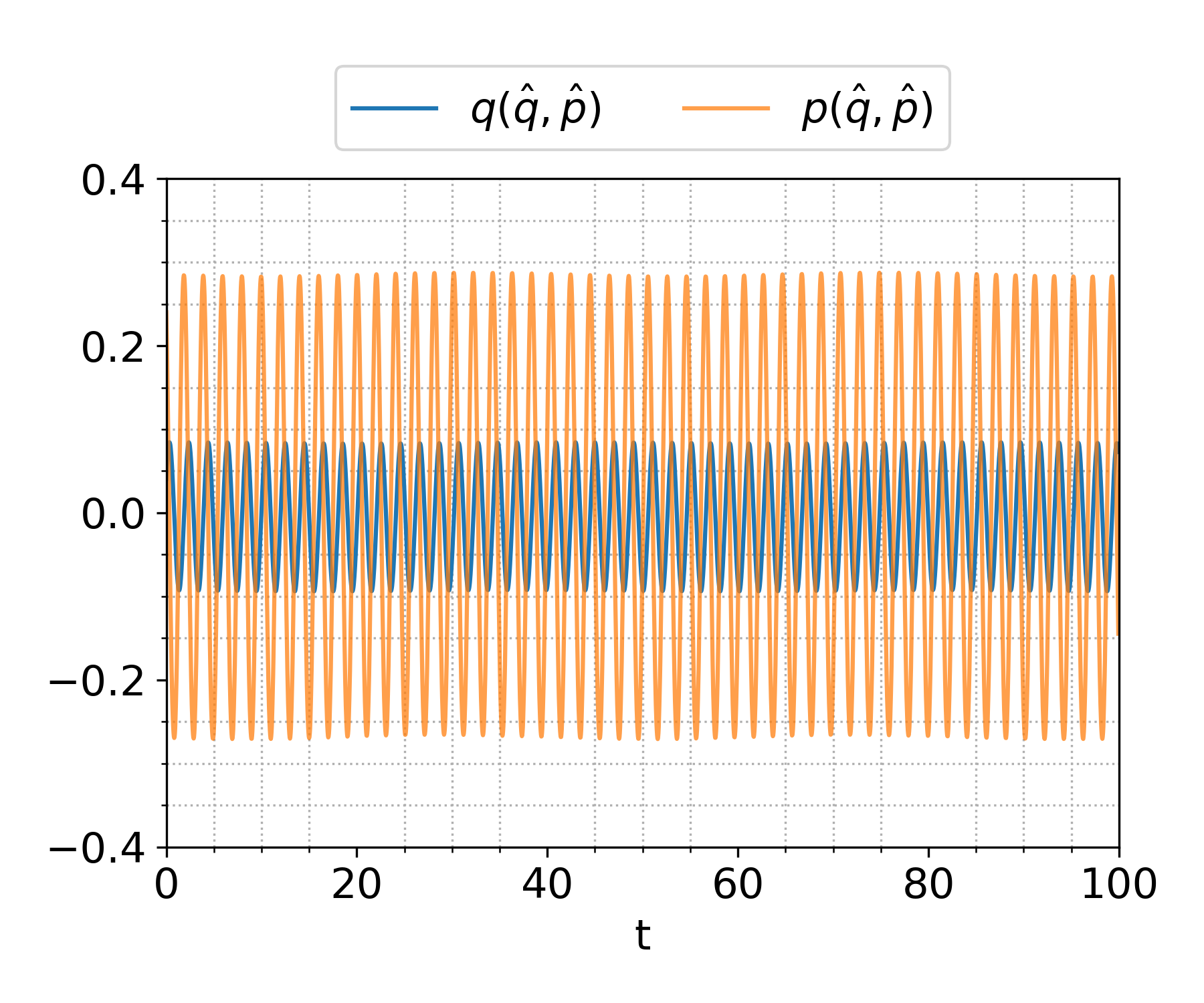}
		\caption{Learned model with\\ \quad latent dimension $4$}
		\label{fig:pend_wood-lat4-time}
	\end{subfigure}
	\caption{Wooden pendulum: A comparison of the learned model with the experimental data.}
	\label{fig:pend_wood-time}
\end{figure*}

\begin{figure*}[tb]
	\centering
	\begin{subfigure}[b]{0.42\textwidth}
		\includegraphics[width=0.8\linewidth]{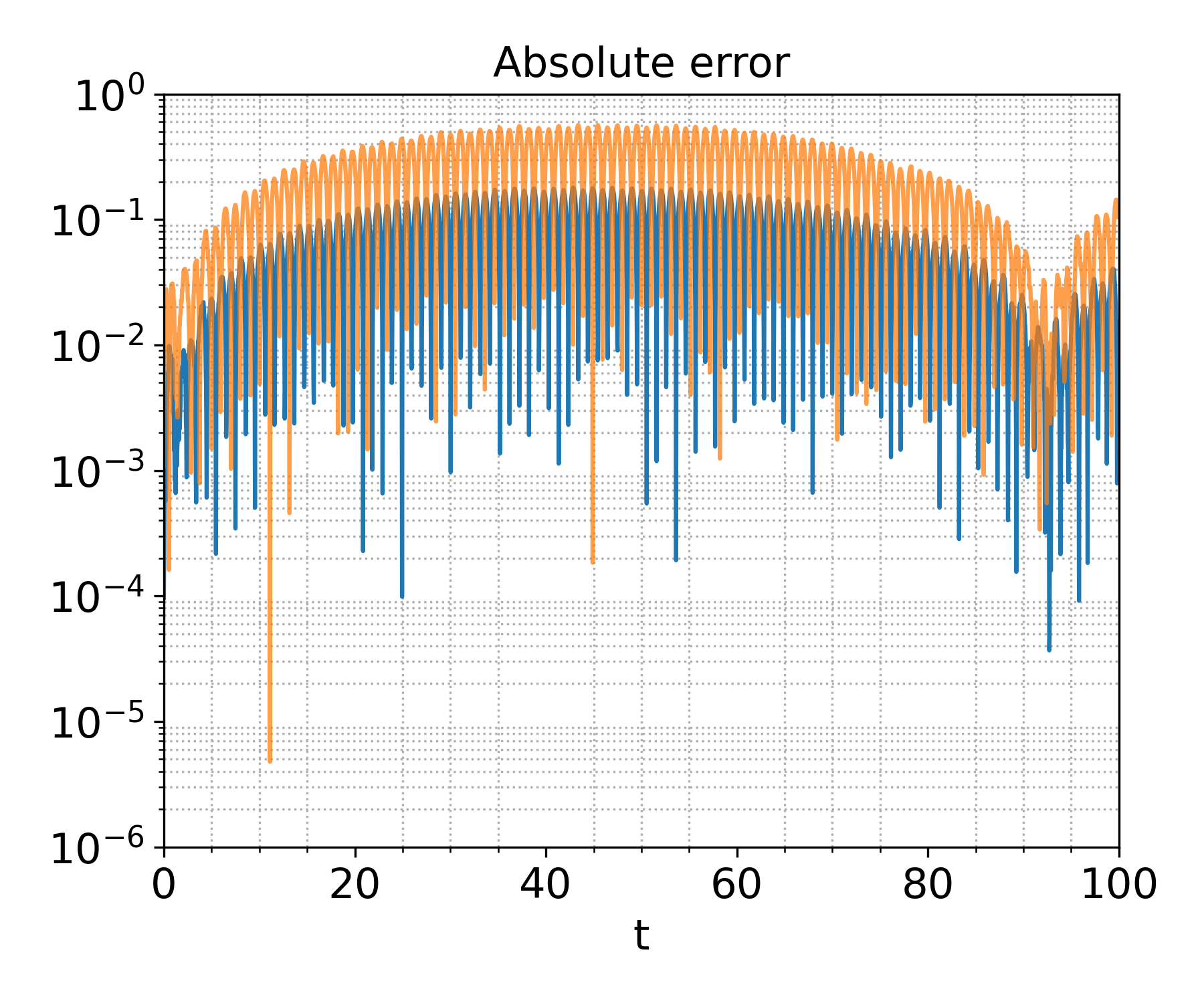}
		\caption{Learned model with\\ \quad latent dimension $2$}
	\end{subfigure}
	\begin{subfigure}[b]{0.42\textwidth}
		\includegraphics[width=0.8\linewidth]{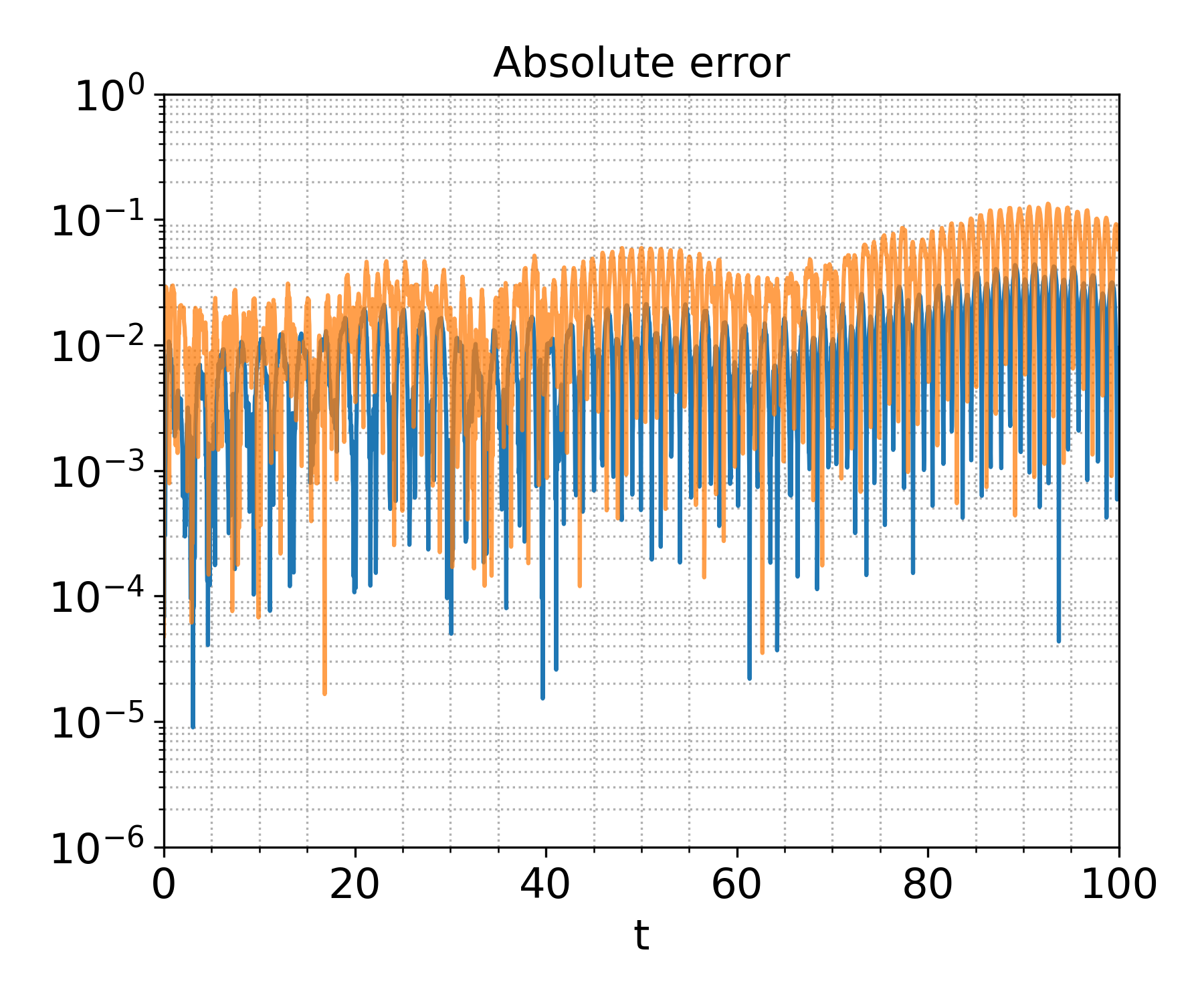}
		\caption{Learned model with\\ \quad latent dimension $4$}
	\end{subfigure}
	\caption{Wooden pendulum: Absolute error between experimental data and learned model of latent dimensions $2$ and $4$.}
	\label{fig:pend_wood-time-abs}
\end{figure*}

\begin{figure*}[tb]
	\centering
	\begin{subfigure}[b]{0.42\textwidth}
		\includegraphics[width=0.8\linewidth]{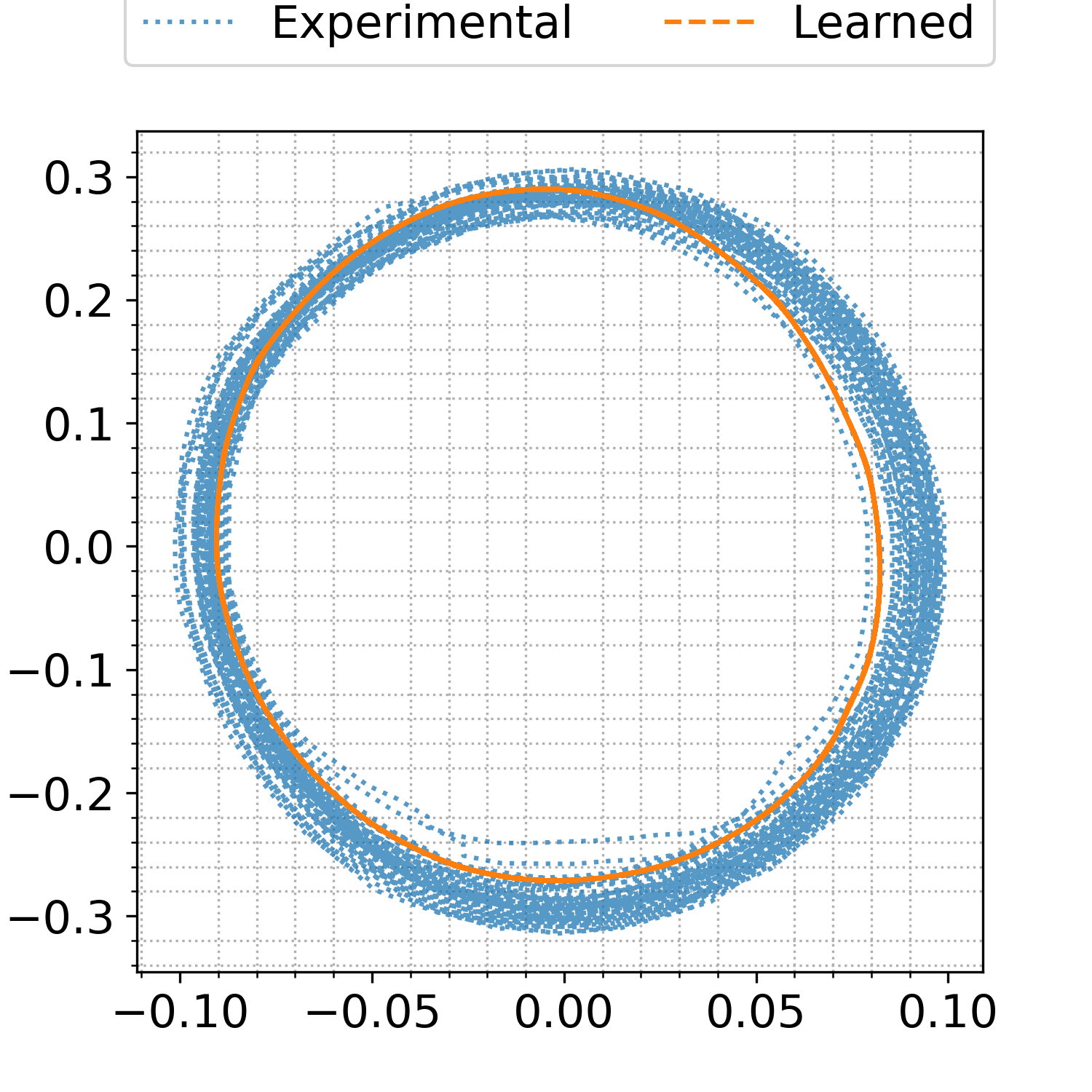}
		\caption{Learned model with\\ \quad latent dimension $2$}
	\end{subfigure}
	\begin{subfigure}[b]{0.42\textwidth}
		\includegraphics[width=0.8\linewidth]{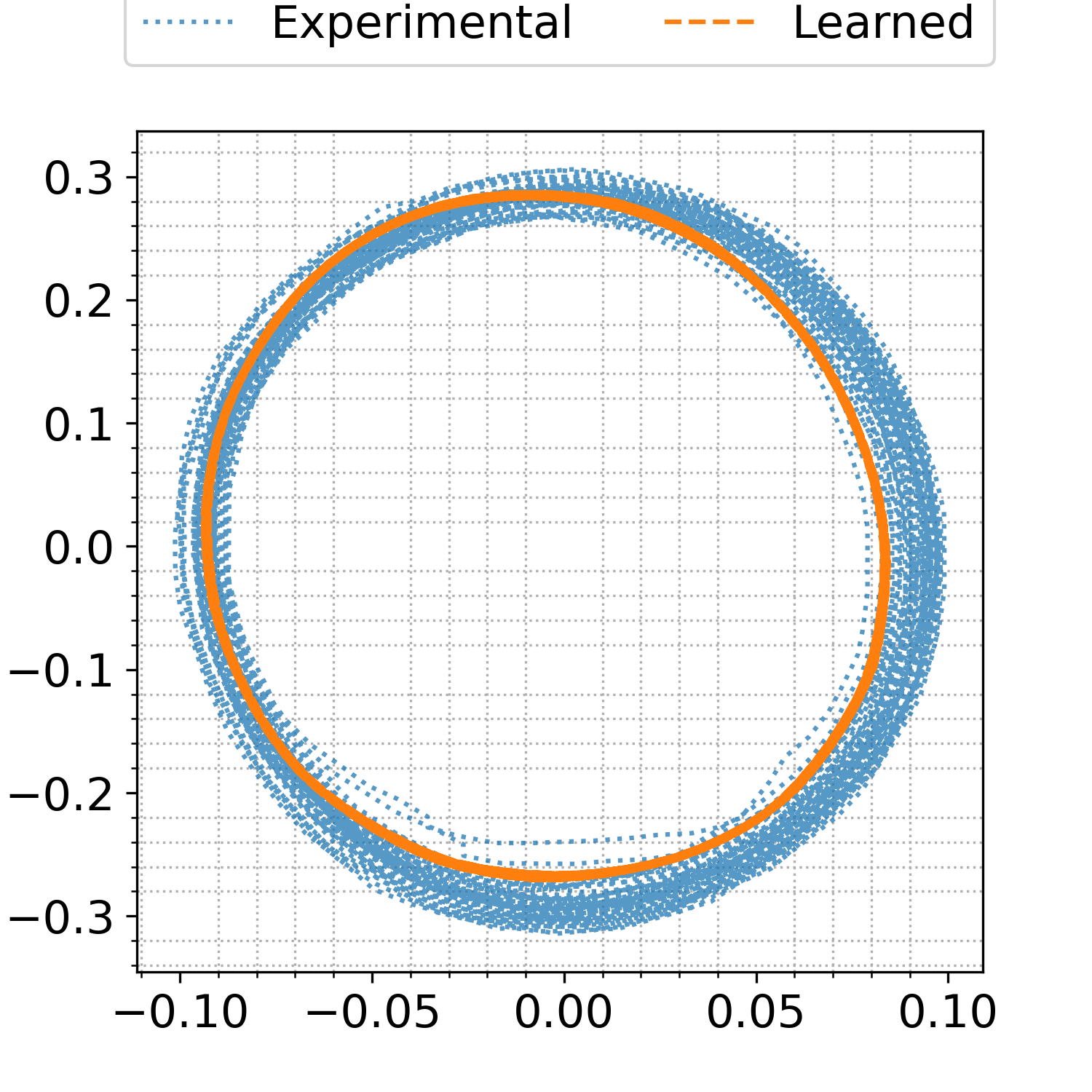}
		\caption{Learned model with\\ \quad latent dimension $4$}
	\end{subfigure}
	\caption{Wooden pendulum: Phase space comparison of the learned model with the experimental data.}
	\label{fig:pend_wood-phase}
\end{figure*}

\subsection{High-dimensional Systems}\label{sec:highdim} 
Next, we focus on learning low-dimensional models for high-dimensional data coming from high-dimensional systems. 
\subsubsection{Linear Wave Equation}
We begin by considering a simple linear wave equation of the form:
\begin{equation}\label{eqn:wave}
\begin{aligned}
&u_{tt} =cu_{xx},\\
& u(t_0,x) = u^0(x), & x \ \text{in}\;\Omega,  
\end{aligned}
\end{equation}
where $c$ is the transport velocity, and boundary conditions are set to be periodic. The wave equation is an example of a Hamiltonian PDE. 
By defining the variables $p=u_t$ and $q=u$, we can obtain 
the Hamiltonian form of 
the wave equation \cite{bridges2006}, which is given by
\begin{equation}\label{eqn:Ham_wave}
\frac{\partial z}{\partial t}
=
\begin{bmatrix}
0&1\\-1&0
\end{bmatrix}
\frac{\delta \Hamiltonian}{\delta z}
,\quad z=
\begin{bmatrix}
q\\p
\end{bmatrix},
\end{equation}
where $\frac{\delta \Hamiltonian}{\delta z} $ is the variational derivative and the Hamiltonian is given as
\begin{equation*}
\Hamiltonian(u) = \dfrac{1}{2}\int_{\Omega}
cq_x^2+p^2\,dx.
\end{equation*}

Next, we discretize the Hamiltonian form of the wave equation and obtain the following semi-discrete Hamiltonian ODE system:
\begin{equation}\label{eqn:Wave-Ham-ODE}
\frac{d \bl z}{d t}=\bl K \bl z, 
\end{equation}
where
$$\bl z=\begin{bmatrix}
\bl q\\ \bl p
\end{bmatrix}, \quad \bl{K}=\begin{bmatrix}
\bl{0}_N&\bl{I}_N\\ c\bl{D}_{xx}& \bl{0}_N
\end{bmatrix}, $$
$ \bl{D}_{xx}\in \mathbb{R}^{N\times N}$ is the three-point central difference approximation of $\partial_{xx} $, $\bl{0}_N\in \mathbb{R}^{N\times N}$ is a matrix of zeros, $\bl{I}_N\in \mathbb{R}^{N\times N}$ is the identity matrix,
and $(\bl q, \bl p) \in \R^{2N}$ are the discretized $(q,p)$.

In this task, we focus on learning a single wave equation over a single trajectory. For this purpose, we set the initial condition to $u^0(x)=\text{sech}(x)$.
For training purposes, we have generated data of the wave equation on the domain $\Omega=[-5,5]$ up to time $T=20$ with time-step size $\Delta t=0.05$. We set the spatial dimension for the ground truth model of wave equation \eqref{eqn:Wave-Ham-ODE} to $2N=1024$ and the learned problem dimension to $2n = 4$.

We compare the learned model with the ground truth in \Cref{fig:waveq-time,fig:wavep-time},
for the states $q$ and $p$, respectively. The figures show that the obtained model is stable and accurate over a long time horizon.
\begin{figure*}[tb]
	\centering
	\begin{subfigure}[t]{1\textwidth}
		\centering
		\includegraphics[width=0.8\linewidth]{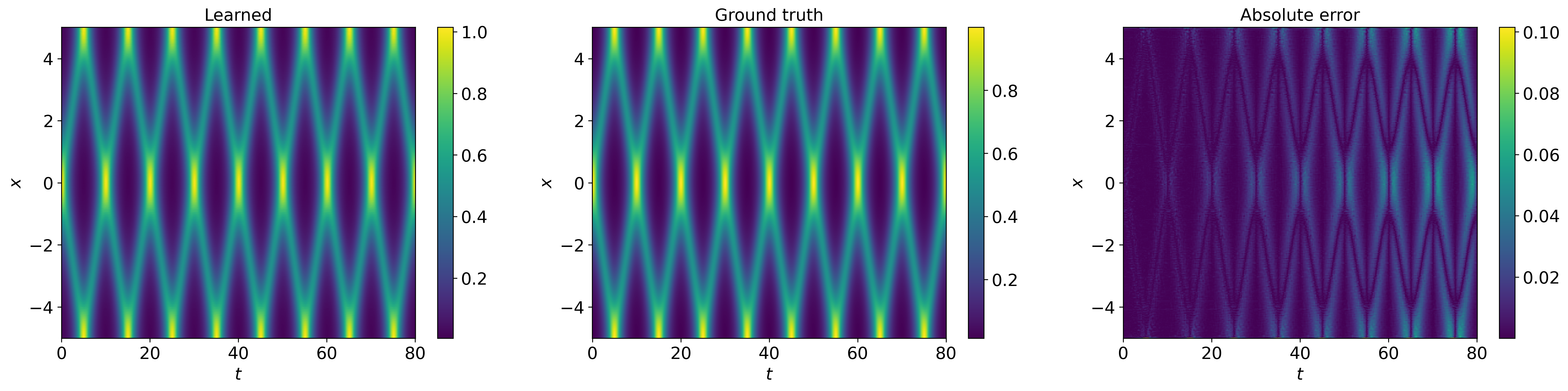}
		\caption{A comparison of the position $q$.}
		\label{fig:waveq-time}
	\end{subfigure}
	\begin{subfigure}[t]{1\textwidth}
		\centering
		\includegraphics[width=0.8\linewidth]{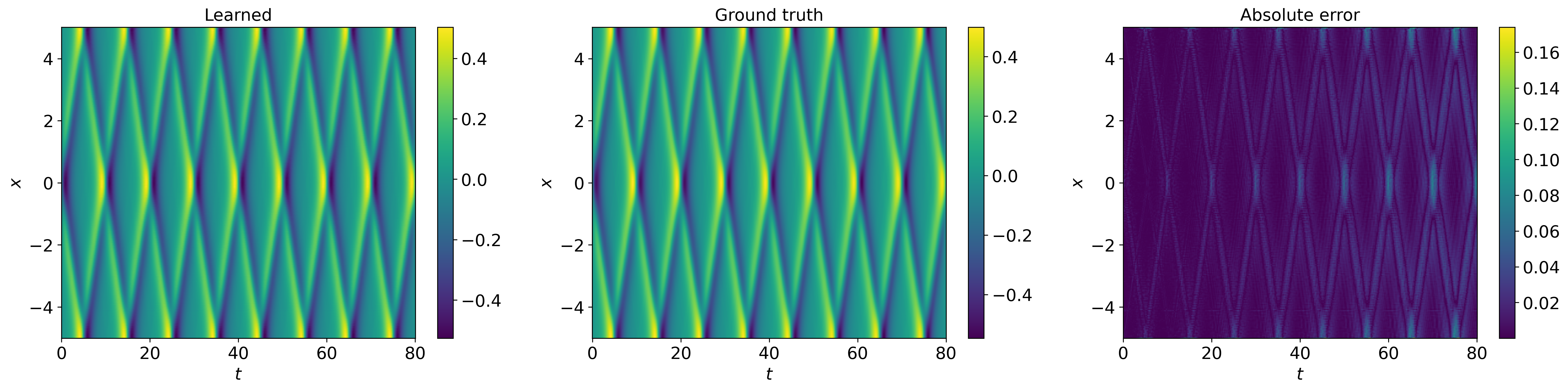}
		\caption{A comparison of the momenta $p$.}
		\label{fig:wavep-time}
	\end{subfigure}
	\caption{Wave equation: Comparisons of the position $q$ and momenta $p$ obtained using the learned model with the ground truth wave model \eqref{eqn:Ham_wave}.}
\end{figure*}
Next, we examine the learned variables in phase space and time domain in \Cref{fig:wave-phase}, which shows that the learned variables 
are orbiting on one particular energy level in phase space and are stable in the time domain as well.

\begin{figure*}[tb]
	\centering
	\includegraphics[width=0.8\linewidth]{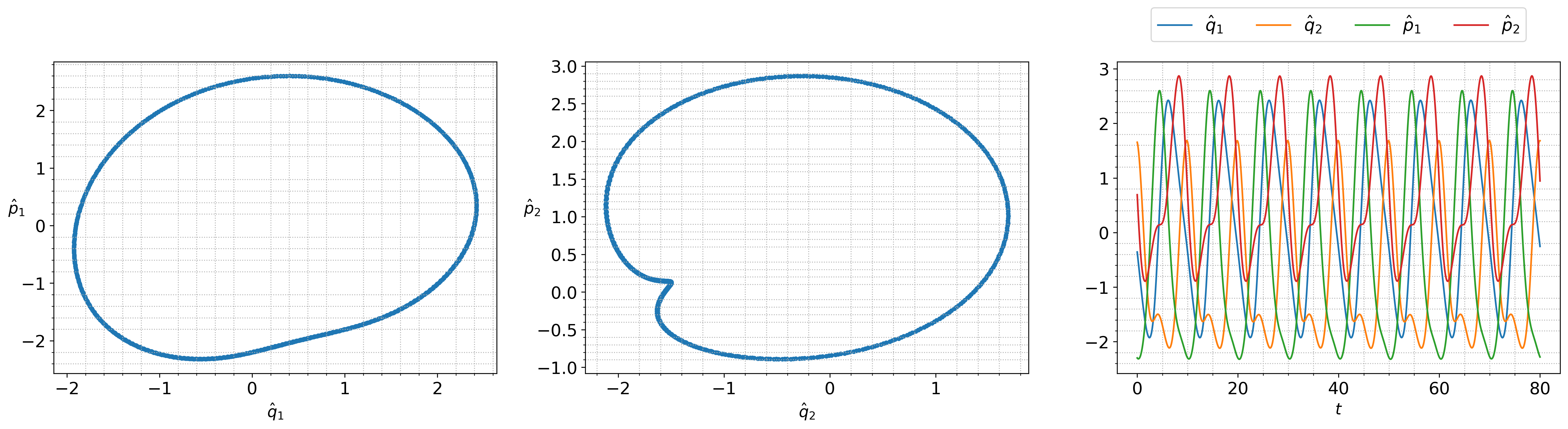}
	\caption{Wave equation: Learned variables in phase space and time domain.}
	\label{fig:wave-phase}
\end{figure*}

\subsubsection{Nonlinear Schrödinger Equation}
Finally, we test the ability of our model to learn the nonlinear Schrödinger (NLS) equation in the last example of a high-dimensional problem. 
The NLS equation has various use cases, e.g., small-amplitude gravity waves  on the surface of deep water with zero-viscosity, 
in the study of Bose-Einstein condensation, and the propagation of light in nonlinear optical fibers.
Specifically, we look at the cubic Schrödinger equation which is given \cite{karasozen2018energy} by
\begin{equation}\label{eqn:schrod}
    \begin{aligned}
    & i \dfrac{\partial u}{\partial t} + \alpha u_{xx}+ \beta |u|^2 u=0, \\
    & u(t_0,x) = u^0(x), & x \ \text{in}\;\Omega,
    \end{aligned}
\end{equation}
with periodic boundary conditions. In the NLS equation \eqref{eqn:schrod}, the parameter $\alpha$ is a non-negative constant and the constant parameter $\beta$ is the focusing (or attractive)---with negative---and defocusing (or repulsive) nonlinearity---with positive---values. In this example,
 we have fixed the parameters to $\alpha=\tfrac{1}{2}$, $\beta=1$, and the domain is fixed to $\Omega=[-10,10]$.

To obtain the canonical Hamiltonian form of the NLS equation \eqref{eqn:schrod}, we write the complex-valued solution $u$ in terms of its
imaginary and real parts as
$u=q+ i p$. Then, the Hamiltonian of the NLS equations reads as 
\begin{equation*}
    \Hamiltonian(u) = \dfrac{1}{2}\int_{\Omega} \bigg[
    \alpha (q_x)^2+
    \alpha (p_x)^2-
    \dfrac{\beta}{2} (q^2+p^2)^2
    \bigg]\,dx.
\end{equation*}

We used the same discretization as in the linear wave equation. The NLS equation was simulated with the initial condition $u^0(x)=\text{sech}(x)$ in the time domain up to final time $T=80$ with time-step size $\Delta t=0.05$.
The spatial dimension for the ground truth model was fixed to $2N=1024$. We used half of the obtained data, i.e., up to $T_{\text{train}}=40$, to train our model. We set the dimension of the
learned model to $2n =2$.

Having learned the desired embedding, in \Cref{fig:nlsq-time,fig:nlsp-time}, we present a comparison of the dynamics of the ground truth model \eqref{eqn:schrod} and the learned model, 
as well as the corresponding absolute error, in the time domain for the states $q$ and $p$, respectively. The figures show that there is a good qualitative agreement between
the solution of the learned model and the ground truth with an error of order $10^{-2}$. This shows that the learned model can infer the dynamics of the nonlinear Schrödinger equation \eqref{eqn:schrod}.
\begin{figure*}[tb]
	\centering
	\begin{subfigure}[t]{1\textwidth}
		\centering
   \includegraphics[width=0.8\linewidth]{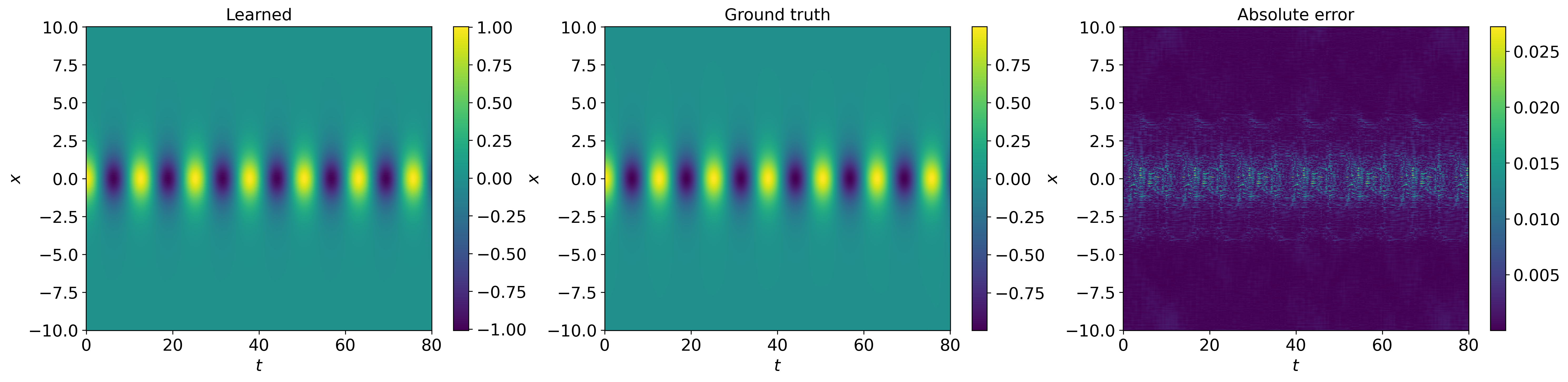}
		\caption{A comparison of the position $q$.}
   \label{fig:nlsq-time}
	\end{subfigure}
	\begin{subfigure}[t]{1\textwidth}
		\centering
   \includegraphics[width=0.8\linewidth]{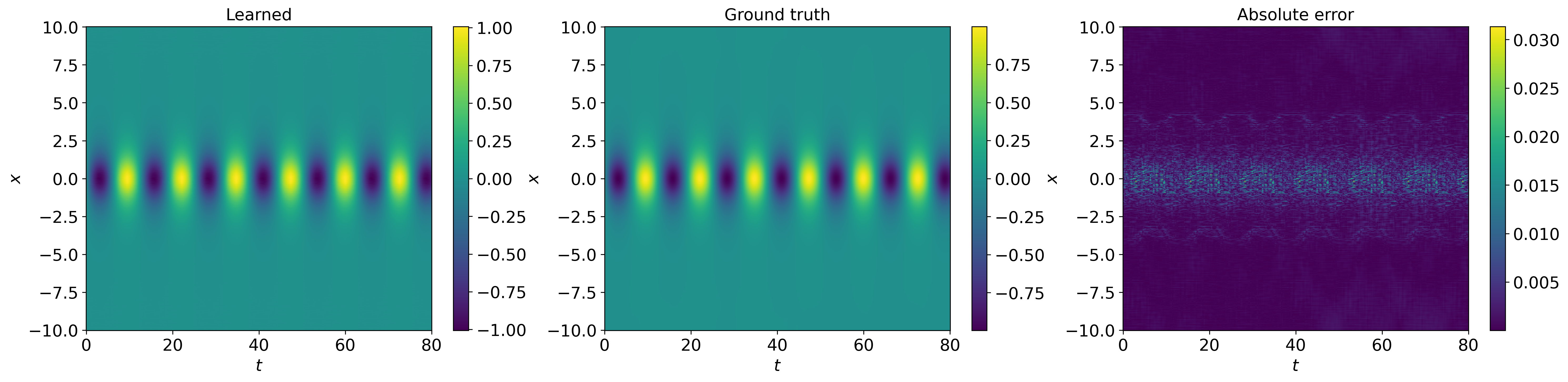}
		\caption{A comparison of the momenta $p$.}
   \label{fig:nlsp-time}
	\end{subfigure}
	\caption{Nonlinear Schrödinger equation: Comparisons of the position $q$ and momenta $p$ obtained using the learned model with the ground truth nonlinear Schrödinger model \eqref{eqn:schrod}.}
\end{figure*}
Lastly, we plot the dynamics of the learned latent phase space and time domain simulation from 
the nonlinear Schrödinger equation \eqref{eqn:schrod} in \Cref{fig:nls-phase}
 to present the stability of the learned dynamics. \Cref{fig:nls-phase} shows that the learned model is suitable for long time
 integration.

\begin{figure*}[tb]
   \centering
   \includegraphics[width=0.6\linewidth]{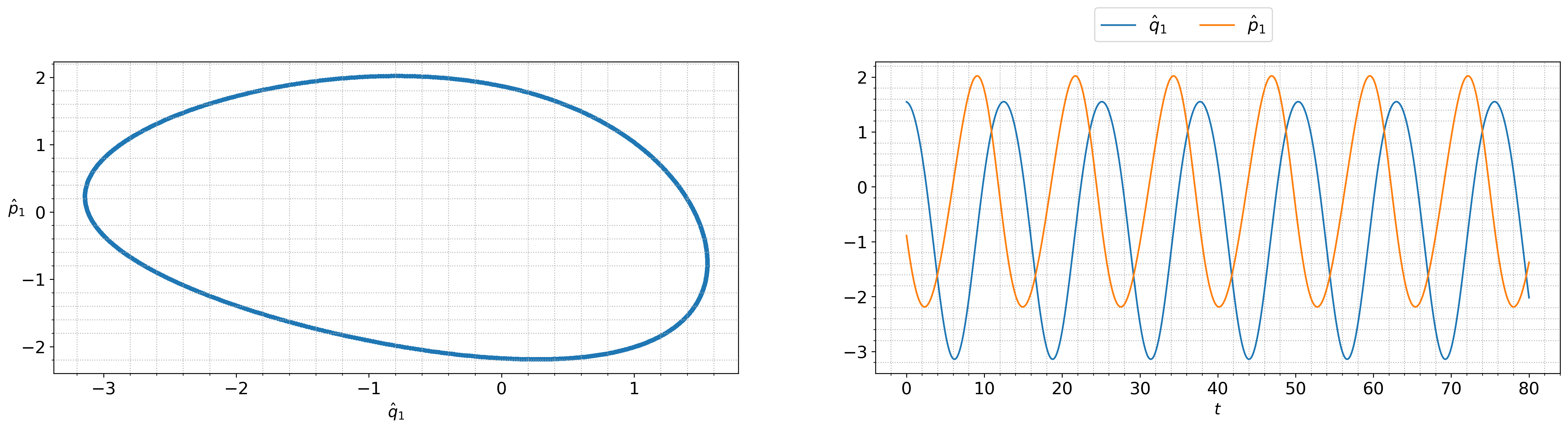}
   \caption{Nonlinear Schrödinger equation: Learned variables in phase space and time domain.}
   \label{fig:nls-phase}
\end{figure*}

 \section{Conclusions}\label{sec:conc}
 
 In this work, we have discussed the concept of data-driven quadratic symplectic representations of nonlinear Hamiltonian systems. We have defined an embedding as the lifting of original data coming from nonlinear Hamiltonian systems using a symplectic transformation, resulting in quadratic systems that describe the dynamics in the lifted space, with a cubic function as the Hamiltonian. The structure of the dynamics can be enforced by using weakly-enforced  symplectic auto-encoders and symmetric tensors.
 
This approach enables us to obtain a learned symplectic lifting. Additionally, for high-dimensional data, we discuss symplectic reduction to achieve a quadratic representation, leading to a low-dimensional quadratic Hamiltonian system. The advantage of this approach over structure-preserving model order reduction is that we directly learn the reduced dynamics fitting the data, eliminating the need for hyper reduction methods or taking gradients through the auto-encoder in the online phase. We note that the proposed methodology does not require to know the full-order model in a discretized form. However, we note that the learning stage of the proposed methodology is computationally demanding and its cost is dominated by the forward pass through an autoencoder and derivative through the autoencoder. As a result, for high-dimensional data, arising by collecting high-dimensional solutions for multiple initial conditions and parameter settings, the proposed methodology is computationally too expensive. Therefore, we will investigate in the future how to tailor the approach to make it more computationally efficient for high-dimensional big data.

We have demonstrated the efficiency of the proposed methodology by means of several low-dimensional and high-dimensional examples, illustrating the preservation of the Hamiltonian, i.e., energy, and long-term stability in extrapolation settings. 
We have shown that our model is able to generalize in the sense of this extrapolation and is able to compute trajectories for a set of initial data not seen in the training, where the training and extrapolation data have similar distributions.
In our future work, we investigate the effect of noise on the performance of the methodology and propose suitable treatments to it, for example, tailoring the approach proposed in \cite{morGoyB23}. Additionally, we will investigate different auto-encoder structures to obtain efficient models for high-dimensional parametric Hamiltonian systems. Lastly, extensions to discrete Hamiltonian systems and externally controlled Hamiltonian systems would be valuable contributions.  

\section*{Funding Statement}
Süleyman Y\i ld\i z and Peter Benner are partially supported by the German Research Foundation (DFG) Research Training Group 2297 ``MathCoRe'', Magdeburg.
\section*{Data Availability Statement}

Data and relevant code for this research work have been archived within the Zenodo Repository \cite{QuadHam23}.

\appendix

\appendix
\section{Implementation Details}\label{appendix:A}
\Cref{tab:hyperparameters-low,tab:hyperparameters-high} contain all the necessary hyper-parameters for our illustrative examples. We set the hyper-parameters experimentally by monitoring the performance of the learned model on training data. For the symplectic lifting case, we have set the hyper-parameters $(\lambda_{1}, \lambda_{2}, \lambda_{3})$ to $(10^{-1},1,1)$ by monitoring all the losses to obtain a balanced decrease of all the losses simultaneously, while in the symplectic reduction case, the hyper-parameters $(\lambda_{1}, \lambda_{2}, \lambda_{3})$ are set to $(1,10^{-1},10^{-1})$ for the same goal. In order to deal with the inaccuracy of the reconstruction due to the structure of the auto-encoder, we have applied the penalisation:
\begin{equation}\label{eq:pen1}
	\mathcal L_{\text{Rec}}=0.5\| x(t)-\phi(\psi(x(t)))\|_{\text{MAE}},
\end{equation}
where $\| \cdot \|_{\text{MAE}}$ denotes the mean absolute error, averaged over all samples and dimensions. Similarly, we have penalized the parameters of $\mathcal L_{\dot z\dot x}$ with the mean absolute error scaled with a hyper-parameter $10^{-5}$. Finally, we used fixed decay of the learning rate in both the symplectic reduction and lifting cases, using the \texttt{StepLR} implementation in \texttt{PyTorch}. We
experimentally fixed both the decay rate and the decay step by monitoring the decay of the total loss function.

For the symplectic lifting case, we have used a Multi Layer Perceptron (MLP) architecture with skip connections and three hidden layer.
\begin{table}
	\renewcommand{\arraystretch}{1.25}
\caption{	The table contains all the hyper-parameters to learn the dynamics of the low-dimensional examples.}
	\label{tab:hyperparameters-low}
	\begin{tabular}{|p{7em}|c|c|c|c|}
		\hline
		{\bfseries Parameters}                                                              & \begin{tabular}[c]{@{}c@{}}{\bfseries Pendulum} \\ {\bfseries example}\end{tabular} & \begin{tabular}[c]{@{}c@{}}{\bfseries Lotka-Volterra} \\ {\bfseries example}\end{tabular} & \begin{tabular}[c]{@{}c@{}}{\bfseries Nonlinear}\\{\bfseries oscillator} \\ {\bfseries example}\end{tabular}&
		\begin{tabular}[c]{@{}c@{}}{\bfseries Pendulum} \\ {\bfseries clock}\\ {\bfseries example}\end{tabular}
		\\ \hline
		Input dimension    & $ 2$      & $2$      &$2$       & $2$   \\ \hline
		Encoder layers [neurons]  & $ [64,64,64]$   & $[32,32,32] $   &  $[32,32,32]$ & $ [64,64,64]$  \\ \hline
		\begin{tabular}[c]{@{}c@{}}Lifted coordinate \\ latent dimension\end{tabular} & 
		$4$  & $4$    &$ 4$    &  $[2,4]$  \\ \hline
		Decoder layers [neurons] & $ [64,64,64]$ & $[32,32,32] $ & $[32,32,32]$ & $ [64,64,64]$ \\ \hline
		Learning rate   & $3\cdot10^{-3}$    &$3\cdot10^{-3}$    & $3\cdot10^{-3}$ & $1\cdot10^{-3}$ \\ \hline
		Batch size   & $5$ & $5$ & $20$& $10$    \\ \hline
		Activation function & \texttt{selu} & \texttt{selu}  & \texttt{selu}& \texttt{selu}   \\ \hline
		Weight decay & $10^{-5}$ & $10^{-5}$ & $10^{-5}$ & $10^{-5}$ \\ \hline
		Epochs  & $5501$& $4501$ & $3501$ & $2501$  \\ \hline
	\end{tabular}%
\end{table}
For the symplectic reduction case, we used a similar deep convolutional network (DCA) structure as the one given in \cite{buchfink2021symplectic}. In \Cref{fig:dcaschematic}, we give the details of the auto-encoder structure.
\begin{figure*}
	\centering
	\includegraphics[width=1\linewidth,height=0.15\textheight]{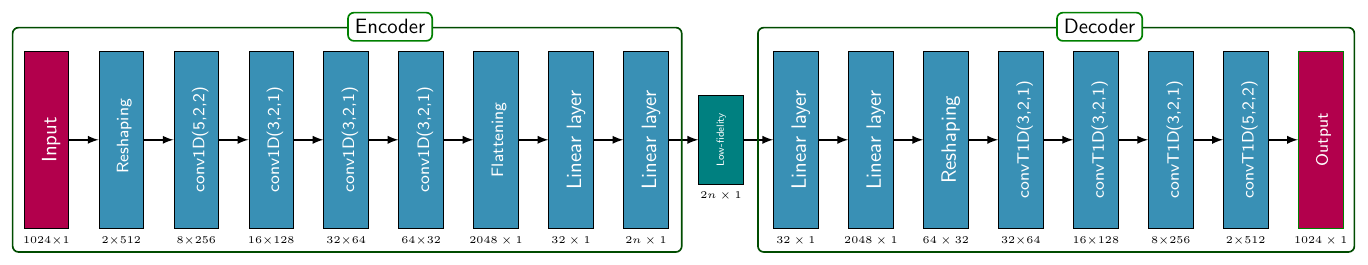}
	\caption{The figure summarises the encoder and decoder architectures. conv1D$(k,s,p)$ denotes a 1D convolution layer with kernel size $k$, stride size $s$, padding size $p$ and similarly, convT1D$(k,s,p)$ is a 1D transpose convolution layer with transpose kernel size $k$, stride size $s$, padding size $p$. We have used output padding size $1$ to obtain symmetric auto-encoder structure in 1D transpose convolution layers. We denote the size of the output block below each block.}
	\label{fig:dcaschematic}
\end{figure*}

\begin{table}
	\renewcommand{\arraystretch}{1.25}
	\caption{The table contains all the hyper-parameters to learn the dynamics of the high-dimensional examples. }
	\label{tab:hyperparameters-high}
	\begin{tabular}{|c|c|c|}
		\hline
		{\bfseries Parameters}                                                              & \begin{tabular}[c]{@{}c@{}}{\bfseries Wave} \\ {\bfseries example}\end{tabular} & \begin{tabular}[c]{@{}c@{}}{\bfseries NLS} \\ {\bfseries example}\end{tabular} \\ \hline
		\begin{tabular}[c]{@{}c@{}}Lifted coordinate \\ system dimension\end{tabular} & 4                                                           & 2                                                                                                                        \\ \hline
		Learning rate   & $10^{-3}$    &$10^{-3}$      \\ \hline
		Batch size                                                             & $50$                                                    & $50$                                                                                                                      \\ \hline
		Activation function                                                    & \texttt{selu}                              & \texttt{selu}                                                                       \\ \hline
		Weight decay                                                           & $10^{-5}$                                                   &     $10^{-5}$                                                                                         \\ \hline
		Epochs                                                                 & 4501                                                        &    3501                                                                                                                        \\ \hline
		Tolerance                                                                 & $5\cdot 10^{-2}$                                            &    $1\cdot 10^{-2}$                                                             \\ \hline
	\end{tabular}
\end{table}

\addcontentsline{toc}{section}{References}
\bibliographystyle{ieeetr}
\bibliography{ref}

\end{document}